  \providecommand\BibTeX{{%
    \normalfont B\kern-0.5em{\scshape i\kern-0.25em b}\kern-0.8em\TeX}}}
\newcommand{\ie}{\emph{i.e., }}
\newcommand{\eg}{\emph{e.g., }}
\newcommand{\wrt}{\emph{w.r.t. }}
\newcommand{\za}[1]{{\color{black}{#1}}}
\newcommand{\ff}[1]{{\color{black}{#1}}}
\begin{document}

%%
%% The "title" command has an optional parameter,
%% allowing the author to define a "short title" to be used in page headers.
\title{Discovering Dynamic Causal Space for DAG Structure Learning  }

%%
%% The "author" command and its associated commands are used to define
%% the authors and their affiliations.
%% Of note is the shared affiliation of the first two authors, and the
%% "authornote" and "authornotemark" commands
%% used to denote shared contribution to the research.
\author{Fangfu Liu}
% \authornote{Both authors contributed equally to this research.}
\orcid{https://orcid.org/0009-0007-9422-218X}
% \author{G.K.M. Tobin}
% \authornotemark[1]
% \email{webmaster@marysville-ohio.com}
\affiliation{%
  \institution{Tsinghua University}
  % \streetaddress{P.O. Box 1212}
  % \city{Dublin}
  % \state{Ohio}
  \country{}
  % \postcode{43017-6221}
}
\email{liuff19@mails.tsinghua.edu.cn}

\author{Wenchang Ma}
\orcid{https://orcid.org/0009-0006-8264-4775}
\affiliation{%
  \institution{National University of Singapore}
  % \streetaddress{1 Th{\o}rv{\"a}ld Circle}
  % \city{Hekla}
  \country{}
  }
\email{e0724290@u.nus.edu}

\author{An Zhang}
\authornote{An Zhang is the corresponding author.}
\orcid{https://orcid.org/0000-0003-1367-711X}
\affiliation{%
 \institution{National University of Singapore}
 % \streetaddress{Rono-Hills}
 % \city{Doimukh}
 % \state{Arunachal Pradesh}
 \country{}
 }
\email{anzhang@u.nus.edu}

\author{Xiang Wang}
\authornote{Xiang Wang is also affiliated with the Institute of Artificial Intelligence, Institute of Dataspace, and Hefei Comprehensive National Science Center.}
\orcid{https://orcid.org/0000-0002-6148-6329}
\affiliation{%
  \institution{University of Science and Technology of China}
  % \streetaddress{30 Shuangqing Rd}
  % \city{Haidian Qu}
  % \state{Beijing Shi}
  \country{}
  }
\email{xiangwang1223@gmail.com}

\author{Yueqi Duan}
\orcid{https://orcid.org/0000-0002-1190-6663}
\affiliation{%
  \institution{Tsinghua University}
  % \streetaddress{8600 Datapoint Drive}
  % \city{San Antonio}
  % \state{Texas}
  \country{}
  % \postcode{78229}
  }
\email{duanyueqi@tsinghua.edu.cn}

\author{Tat-Seng Chua}
\orcid{https://orcid.org/0000-0001-6097-7807}
\affiliation{%
  \institution{National University of Singapore}
  % \city{Rocquencourt}
  \country{}
}
\email{dcscts@nus.edu.sg}
%%
%% By default, the full list of authors will be used in the page
%% headers. Often, this list is too long, and will overlap
%% other information printed in the page headers. This command allows
%% the author to define a more concise list
%% of authors' names for this purpose.
\renewcommand{\shortauthors}{Fangfu Liu et al.}
%%
%% The abstract is a short summary of the work to be presented in the
%% article.
\begin{abstract}
Discovering causal structure from purely observational data (\ie causal discovery), aiming to identify causal relationships among variables, is a fundamental task in machine learning.The recent invention of differentiable score-based DAG learners is a crucial enabler, which reframes the combinatorial optimization problem into a differentiable optimization with a DAG constraint over directed graph space. Despite their great success, these cutting-edge DAG learners incorporate DAG-ness independent score functions to evaluate the directed graph candidates, lacking in considering graph structure. As a result, measuring the data fitness alone regardless of DAG-ness inevitably leads to discovering suboptimal DAGs and model vulnerabilities. 

Towards this end, we propose a dynamic \underline{ca}usal \underline{sp}ace for DAG structure l\underline{e}a\underline{r}ning, coined \textbf{CASPER}, that integrates the graph structure into the score function as a new measure in the causal space to faithfully reflect the causal distance between estimated and ground-truth DAG. CASPER revises the learning process as well as enhances the DAG structure learning via adaptive attention to DAG-ness. Grounded by empirical visualization, CASPER, as a space, satisfies a series of desired properties, such as structure awareness and noise-robustness. Extensive experiments on both synthetic and real-world datasets clearly validate the superiority of our CASPER over the state-of-the-art causal discovery methods in terms of accuracy and robustness. 
    % Discovering causal structure from purely observational data (\ie causal discovery) is a fundamental task in machine learning. 
    % It aims to identify causal relationships between variables and describes conditional independence as a directed acyclic graph (DAG). 
    % Score-based DAG structure learning is a crucial enabler of causal discovery, which is a challenging combinatorial problem in nature. 
    % The recent development reframes the combinatorial optimization problem into a differentiable continuous optimization task. 
    % However, the majority of commonly used differentiable structure learning methods ignore the importance of graphical itself, especially DAG-ness information. 
    % As a result, these methods suffer from suboptimal structure and model vulnerability. 
    % In this paper, we discover a dynamic causal space and propose a novel differentiable method CASPER to reconstruct the intrinsic DAG structure. 
    % It allows us to enhance the structure learning via adaptive attention to DAG-ness. 
    % Experiments on both synthetic and real-world datasets validate the superiority of our CASPER in terms of accuracy and robustness over leading baselines. 
    % Our code will be open-sourced upon acceptance.
\end{abstract}

%%
%% The code below is generated by the tool at http://dl.acm.org/ccs.cfm.
%% Please copy and paste the code instead of the example below.
%%
\begin{CCSXML}
<ccs2012>
   <concept>
       <concept_id>10002950.10003648.10003649.10003655</concept_id>
       <concept_desc>Mathematics of computing~Causal networks</concept_desc>
       <concept_significance>500</concept_significance>
       </concept>
   <concept>
       <concept_id>10010147.10010178.10010187.10010192</concept_id>
       <concept_desc>Computing methodologies~Causal reasoning and diagnostics</concept_desc>
       <concept_significance>500</concept_significance>
       </concept>
 </ccs2012>
\end{CCSXML}

\ccsdesc[500]{Mathematics of computing~Causal networks}
\ccsdesc[500]{Computing methodologies~Causal reasoning and diagnostics}

%%
%% Keywords. The author(s) should pick words that accurately describe
%% the work being presented. Separate the keywords with commas.
\keywords{Differentiable Causal Discovery, Score-based Structure Learning, Score Function, DAG-ness Aware Scoring}

%% A "teaser" image appears between the author and affiliation
%% information and the body of the document, and typically spans the
%% page.

% \received{20 February 2007}
% \received[revised]{12 March 2009}
% \received[accepted]{5 June 2009}

%%
%% This command processes the author and affiliation and title
%% information and builds the first part of the formatted document.
\maketitle

\section{Introduction}
Learning directed acyclic graph (DAG) structure from observational data (\ie causal discovery) is a fundamental problem~\cite{CD-survey-2022} in machine learning for a broad range of applications, including genetics~\cite{prob-graph-model-2009}, biology~\cite{corr2causal-2007}, economics~\cite{pearl-models-2000, pearl2010causal} and social science~\cite{social-science-2019}. 
The purpose of DAG structure learning is to discover causal relationships among a set of variables that are encoded in a DAG~\cite{spirtes2000causation}. 
Conventional score-based methods assess the directed graph \za{candidates} by utilizing a pre-defined score function over the DAG space~\cite{silander2012simple}. 
However, the intractable combinatorial nature of acyclic space frames DAG learning as a combinatorial optimization problem \wrt discrete edges, which has been proven to be NP-hard~\cite{NP-hard}. 
A recent breakthrough, NOTEARS~\cite{notears2018}, successfully transforms the discrete DAG constraint into a continuous equality constraint, resulting in a differentiable optimization \za{framework with an acyclic regularization term}.
Following \za{differentiable causal discovery methods}~\cite{DAG-GNN-2019, gran-dag, RL-CD, notears-mlp, bhattacharya2021differentiable, NoCurl}, inspired by NOTEARS~\cite{notears2018}, optimize the score function by leveraging \za{various} highly parameterized deep networks \za{via} gradient descent.
\za{Though effective, these cutting-edge methods inevitably simplify the searching space from DAG space to directed graph space, further increasing the risk of discovering suboptimal DAGs. }

\begin{figure*}[!t]
    \centering
    \includegraphics[width=0.9\linewidth]{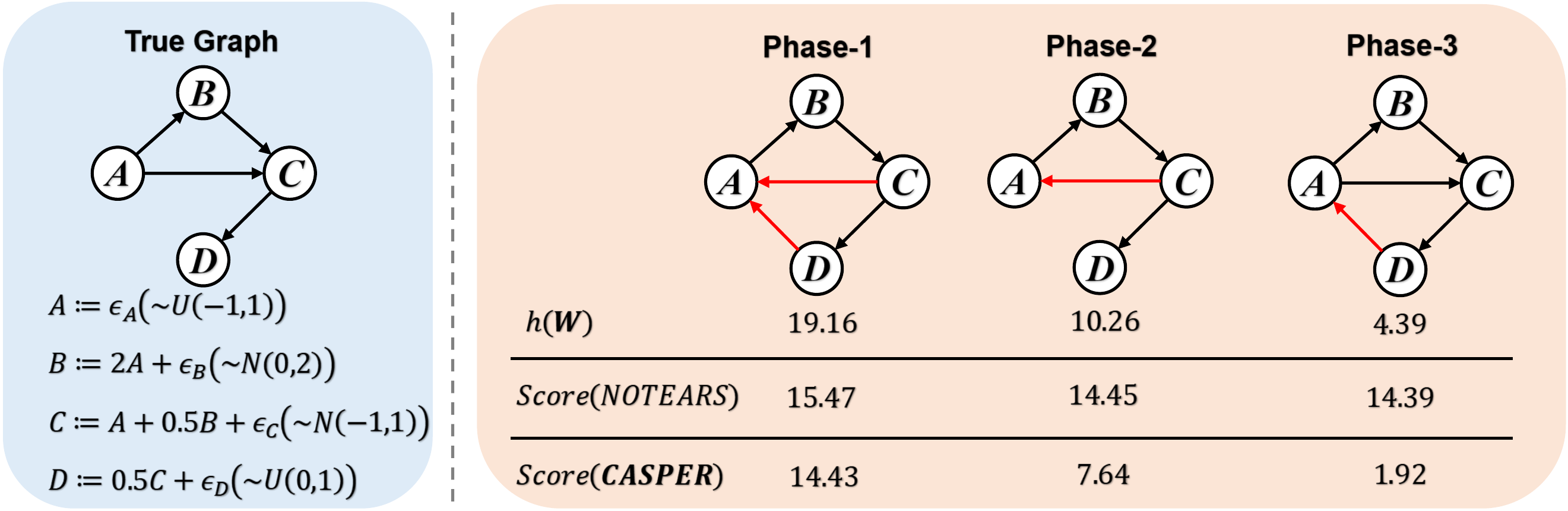}
    \caption{
    An illustrative example of the DAG learning progress is that NOTEARS may yield the same scores for different DAG-ness graphs across different optimization phases, each parameterized by $h(\mathbf{W})$.
    The values of $h(\mathbf{W})$ defined in Equation \eqref{eq: differentiable-score} quantify the extent of violations of acyclicity as the weighted matrix $\mathbf{W}$ deviates further from DAGs. 
    % A\za{n} illustrative example of the causal structure that NOTEARS may yield the same scores through different DAG-ness from phase-1 to phase-3 parameterized by $h(\mathbf{W})$, which quantifies how severe violations from acyclicity become as weighted matrix $\mathbf{W}$ moves further from DAGs.
    Consequently, NOTEARS-based methods fail to quantify the intrinsic causal distance by conventional score function. 
    % Therefore, NOTEARS-based methods fail to quantify the intrinsic causal distance by conventional score function. 
    In contrast, our method CASPER can dynamically perceive the DAG structure and score the models based on the underlying causal relationships, further guiding the DAG structure learning.
    % We notice that our method CASPER can adaptively perceive the DAG-ness and score the models based on causal structure.  
    }
    \label{fig:teaser}
\end{figure*}

\za{This motivates us to rethink the framework of score-based differentiable causal discovery, aiming to infer the causal structure model that encodes both graph structure (\ie DAG) and data mapping (\ie structural equations).}
\za{Three essential components comprise the most recent differentiable score-based DAG learner: score function, DAG constraint, and deep networks for gradient-based optimization.}
\za{DAG constraint, a hard penalty, quantifies the DAG-ness of graph and its coefficient has to go to infinity to impose acyclicity, whereas, in most training processes, the DAG learner searches in directed graph space.}
\za{While prevalent score functions, such as least square loss~\cite{notears2018, notears-mlp} and maximum log-likelihood estimator~\cite{gran-dag}, only evalute the goodness-of-fit, which describes how well the data fit the estimated structure equations. }
\za{In other words, the majority of existing score functions neglect the graph structure and merely evaluate the data fitness using static metrics \cite{static-space} for all the directed graphs, regardless of the DAG-ness degree}. 
\za{To ensure today's score functions appropriately evaluate the candidate directed graphs, these static metrics are implicitly defined in a fixed scoring space holding an inherent assumption, \ie that the estimated directed graph is an acyclic graph throughout the training process.}
\za{However, the differentiable optimization framework makes it simple to violate this underlying assumption.} 
\za{We conclude that measuring the static fitness of the structural equations regardless of DAG-ness fails to quantify the intrinsic distance between the estimated directed graph and the optimal DAG~\cite{notears2018, kaiser2021unsuitability}. This contributes to learning a suboptimal DAG, also leading to the lack of noise-robustness in DAG learning~\cite{kaiser2021unsuitability, DAG-GNN-2019}.}

\za{In this paper, we conjecture that an ideal score function should not only successfully measure the data fitness but also take the graph structure into consideration.}
We further substantiate our claim with an illustrative example as shown in Figure~\ref{fig:teaser}. % add the teaser
\za{In this example, the estimated directed graph is forced to be more acyclic (\ie the value of $h(\mathbf{W})$ drops) as the intensity of the DAG constraint grows in the training process. 
The scores of NOTEARS in phases 2 and 3, focusing solely on data fitness, are comparatively indiscriminate.
Unfortunately, facing an exponentially increasing of  DAG constraint coefficient, NOTEARS is likely to encounter ill-conditioning issues and slip into the local minimum in phase 2. 
In contrast, benefiting from taking graph structure into account, it is considerably easier for CASPER to pass phase 2, avoiding falling into a local optimal graph.
Hence, such DAG-ness independent score fucntions hardly reveal the distance between the estimated and ground-truth DAG, being at odds with the true learning process under causal structure space.
}

% Despite the proliferation of literature, most score-based approaches suffer from two essential limitations, namely a DAG-ness independent score function and DAG learning vulnerability~\cite{kaiser2021unsuitability}. Prevalent score functions, such as least square loss~\cite{notears2018, notears-mlp} and maximum log-likelihood estimator~\cite{gran-dag}, use static metrics for all the graphs in any DAG-ness degree. Such static metrics are defined in the fixed score space holding an implicit assumption, \ie the learned structural equation model encodes an acyclic graph throughout the training process. However, this inherent assumption is easily violated during the optimization procedure. Measuring the static fitness of the structural equation model regardless of structural DAG-ness fails to quantify the intrinsic distance between the estimated graph and the optimal DAG~\cite{notears2018}. This further results in learning a DAG structure that is suboptimal. We substantiate our claim with an illustrative example as shown in Figure~\ref{fig:teaser}. % add the teaser
% Worse still, the majority of score-based methods guarantee structural identifiability by assuming strict variable distribution and parametric model conditions. The rigorous assumptions may not accurately reflect the distribution of observational data in the real world~\cite{peters2016causal}, leading to the lack of noise-robustness in DAG learning~\cite{kaiser2021unsuitability, DAG-GNN-2019}.

\za{Motivated by the limitation of DAG-ness independent score functions, we attempt to discover a simple form of score function encoding the graph structural information. We consider DAG structure learning in a dynamic space from a distributional view as opposed to using the static metrics of existing score functions.  Specifically, the desirable score function, as the measure in this new space, has causal semantics, indicating that by incorporating information from structural equation models, it can accurately reflect DAG-ness of candidate graphs.  The DAG-ness-aware property of the new score function enables us to alleviate the local minimum issue and helps reconstruct a more precise DAG. As a result, the dynamical space can measure intrinsic causal relationships and data fitness from the perspective of distribution, which can further enhance the robustness of the DAG learner.}

% Aiming to address the aforementioned issues, we try to discover a novel space to consider DAG-structure learning from a dynamic and distributional view, including several good properties. Specifically, the desirable measure in this new space contains causal semantics, \ie it can reflect DAG-ness by encoding the information of structural equation models. The DAG-ness-aware property leads us to alleviate the problem of local minimum and helps reconstruct a more accurate DAG. Then the dynamical space can measure intrinsic causal relationships and data fitness from the perspective of distribution, which can enhance the robustness of the causal model.

Guided by this idea, we propose a dynamic \underline{ca}usal \underline{sp}ace for DAG structure l\underline{e}a\underline{r}ning, coined \textbf{CASPER}, which satisfies a series of good properties, including complete probability metric and noise-robustness. 
In this paper, firstly, we develop a descriptor that encapsulates the graph structural equation. 
By inserting the descriptor of the causal space, CASPER may dynamically adjust the complexity of the measure in accordance with DAG-ness in the optimization process. 
Secondly, we use the measure (also known as causal distance in our paper) defined in causal space as the primary component of the score function. As a result, we may adaptively perceive the DAG-ness of candidate graphs in the training process. Thirdly, we define the Boral probability measure in our causal space, which may accurately reflect the sampling distribution while remaining faithful to the DAG. Our causal space is therefore robust to the distortion caused by noise in observational data.
% In this paper, firstly, we define a descriptor that encodes the graph structural equation. 
% By introducing the descriptor of the causal space, CASPER can dynamically adjust the complexity of the measure according to DAG-ness in optimization process. 
% Secondly, we take the measure (also called causal distance in our paper) defined in this space as the main part of the score function. Therefore, we can adaptively perceive the DAG-ness in the training process. Thirdly, we define the Boral probability measure in our causal space, which can correctly capture the sampling distribution faithful to the DAG. As a result, our causal space is robust to the perturbation of observational data. 

% \begin{figure*}[!t]
%     \centering
%     \includegraphics[width=0.9\linewidth]{kdd2023/Figures/teaser_v4.pdf}
%     \caption{A illustrative example of the causal structure that NOTEARS would have the same scores through different DAG-ness from phase-1 to phase-3 parameterized by $h(\mathbf{W})$, which quantifies how severe violations from acyclicity become as weighted matrix $\mathbf{W}$ moves further from DAGs. Therefore, NOTEARS-based methods fail to quantify the intrinsic causal distance by score function. We notice that our method CASPER can adaptively perceive the DAG-ness and score the models based on causal structure.  }
%     \label{fig:teaser}
% \end{figure*}

In summary, our contributions are highlighted as:
\begin{itemize}[leftmargin=*]
    \item To the best of our knowledge, we are among the first class to impose the DAG-ness-aware information into the framework of differentiable DAG structure learning.
    \item
    We propose a novel optimization scheme for DAG structure learning called CASPER. CASPER encodes the graph structure of the structural equation model using a dynamic causal space, allowing us to enhance the score function with adaptive attention to the causal structure.
    % We discover a novel optimization scheme for DAG structure learning called CASPER that encodes the graph structure of the structural equation model in a dynamic causal space. This allows us to enhance the score function with adaptive attention to causal structure.
    \item Extensive experiments both on synthetic and real-world datasets demonstrate our proposed method can significantly improve the performance of existing causal discovery models.
\end{itemize}

% dagness-independent --> different dagness have same score --> spurious edges

\section{Algorithm}
Prevailing algorithms for DAG structure learning can be broadly categorized into two research lines: constraint-based methods~\cite{heterogeneous-2020} and score-based methods~\cite{glymour2019review, CD-survey-2022}. 
% Prevailing algorithms of DAG structure learning are mainly divided into two research lines: constraint-based methods~\cite{heterogeneous-2020} and score-based methods~\cite{glymour2019review, CD-survey-2022}. 
Constraint-based approaches always test for conditional independencies according to the empirical joint distribution under certain assumptions~\cite{spirtes2000causation, causal-from-soft-intervention}, in order to construct a graph that reflects these conditional independencies. 
On the other hand, the score-based approaches evaluate the validity of a candidate graph $\mathcal{G}$ under some predefined score function~\cite{nofears}. 
In this paper, our focus is primarily on differentiable score-based algorithms. 
% Here we will mainly focus on the differentiable score-based algorithms. 
Before introducing our CASPER, we provide a brief overview of the fundamental concepts in DAG structure learning.
% Before presenting our method CASPER, we review the basic concepts in DAG structure learning.

\subsection{Problem Definition}
DAG structure learning (\ie causal discovery) aims to infer the Structural Equation Model (SEM)~\cite{pearl-models-2000, causal-inference-statistics} from the observational data, which models the data generating procedure. 
Formally, the basic DAG structure learning problem is formulated as follows: Let $\mathbf{X} = [\mathbf{x_1}|...|\mathbf{x_d}]\in \mathbb{R}^{n\times d}$ be a data matrix consisting of $n$ i.i.d. observational data of $d$ variables. 
And $\mathcal{H}$ denotes the space of DAGs $\mathcal{G} = (\mathcal{V}, \mathcal{E})$ on $d$ nodes, where $\mathcal{V}$ represents the set of node variables, denoted as $X = {(X_1,..., X_d)}$ and $\mathcal{E}$ is the set of cause-effect edges between variables. Given $\mathbf{X}$, we try to learn a directed acyclic graph (DAG) $\mathcal{G}\in \mathcal{H}$ for the joint distribution $P(X)$ \citep{spirtes2000causation, prob-graph-model-2009}. To model $X$, we consider a generalized structural equation model (SEM) as follows:
\begin{equation}
    X_j := f_j(X_{pa(X_j)}) + N_j , \quad j\in \{1,...d\} ,
    \label{eq: SEM}
\end{equation}

where $X_j$ is the $j$-th node variable, $pa(X_j)$ denote the parents of $X_j$, $f_j$ is the causal structure funntion, and $N_j$ refers to the additive noise with variance $\sigma_j^2$. Without loss of generality, the observed data $\mathbf{X}$ can be regarded as the samples from the joint distribution $P(X)$, our goal is to use the samples to reconstruct the underlying causal structure represented by DAG $\mathcal{G}$. 

\subsection{Preliminary}
\label{subsec: preliminary}
\textbf{Structure Identifiability.} Unraveling the identifiability of causal direction is a crucial issue in the process of DAG structure learning. In general, it is impossible to reconstruct $\mathcal{G}$ given only observational samples from $P(X)$ if we do not impose any assumptions on SEMs (\ie Equation~\eqref{eq: SEM}). Considering a set of assumptions $\mathcal{A}$ over a causal graphical model $\mathcal{M}_\mathcal{A} = (P_X, \mathcal{G})$, the graph $\mathcal{G}$ is identifiable from $P(X)$ if and only if there is no other $\tilde{\mathcal{M}}_\mathcal{A} = (\tilde{P}_X, \tilde{\mathcal{G}})$ satisfying the same $\mathcal{A}$ such that $\tilde{\mathcal{G}} \neq \mathcal{G}$ and $\tilde{P}(X) = P(X)$. To satisfy the identifiability of the graph, researchers~\cite{loh2014high-dimension-CD, pena2018identifiability, peters2014causal-with-adm} always assume that the conditional densities belong to a specific parametric family (\eg additive noise models).
\newline

\noindent \textbf{Score-based Structure Learning.} The goal of conventional score-based structure learning is stated as the following combinatorial optimization problem~\cite{element-of-causal-inference}:
\begin{equation}
\begin{aligned}
 \min _{\mathcal{G}} F( \mathbf{X}; \mathcal{G}) & =\mathcal{L}_{\text{rec}}(\mathbf{X}; \mathcal{G})+\lambda \mathcal{R}_{\text {sparse }}(\mathcal{G}), \\
&\text { s.t. } \mathcal{G} \in \text { DAGs, }
\end{aligned}
\label{eq: common-score-def}
\end{equation}
where $F$ is a score function, $\mathbf{X}$ symbolizes the observational data and $\mathcal{G}$ refers to a directed graph. 
We notice that the score function $F( \mathbf{X}; \mathcal{G})$ consists of two terms: (1) the measure of graph reconstruction, referred to as the proximity of the optimized graph to the true DAG; (2) the sparsity regularization term, represented as $\mathcal{R}_{\text{sparse}}(\mathcal{G})$, mandates that the count of edges in the graph be subject to a penalty, typically achieved through $l_1$ regularization in practice. 
The hyperparameter $\lambda$ plays a pivotal role in modulating the significance of the regularization process. 
Common score functions include the MDL~\cite{MDL}, BIC~\cite{BIC-score} and BGe~\cite{BGe}. 
However, the Equation~\eqref{eq: common-score-def} is NP-hard to solve due to the nonconvex and combinatorial nature of the optimization problem~\cite{chickering2004large-nphard, chickering1996learning-nphard}. To address the combinatorial problem, Zheng et al.~\cite{notears2018} convert it into a continuous program as:
\begin{equation}
    \min _{\mathcal{G}} F( \mathbf{X}; \mathcal{G} ) \quad \text { s.t. } \quad h(\mathcal{G})=0 \text{,}
    \label{eq: differentiable-score}
\end{equation}
where $h(\mathcal{G})=0$ is a differentiable function over real matrices, whose level set at zero exactly means the acyclicity of a graph. Note that, there are various alternatives of $h(\mathcal{G})$~\cite{notears2018, DAG-GNN-2019, nofears, kyono2020castle} in literature. Therefore, we went from the combinatorial optimization problem to a continuous constrained optimization problem. Fortunately, numerous solutions (\eg augmented Lagrangian method~\cite{bertsekas1997nonlinear-lagrange, nemirovsky1999optimization-lagrange}) can be applied to solve the Equation~\eqref{eq: differentiable-score}. 
As a result, the optimization problem in Equation~\eqref{eq: differentiable-score} can be further reformulated as:
\begin{equation}
    \min \limits_{\mathcal{G}} \, F(\mathbf{X}; \mathcal{G}) + \mathcal{L}_{\text{DAG}}(\mathcal{G}, \alpha_t, \mu_t), 
\end{equation}
where $\mathcal{L}_{\text{DAG}} = \alpha_t h(\mathcal{G}) + \frac{\mu_t}{2} |h(\mathcal{G})|^2$ is the penalty term in Lagrangian method. $\alpha_t$ and $\mu_t > 0$ are the penaly coefficients of the $t_{th}$ subproblem respectively.
Existing differentiable approaches always predefine a static $F(\mathbf{X};\mathcal{G})$ to measure SEMs in a fixed space (\eg penalized least-square loss in a fixed Euclidean Space~\cite{notears-mlp, notears2018, ng2019graph-autoencoder} and Evidence Lower Bound (ELBO) in a fixed asymmetry probability space~\cite{DAG-GNN-2019}) without considering graph itself. Considering that the score function aims to measure the goodness of a causal structure, a sufficient score function should include three parts: in addition to the first two terms in Equation~\eqref{eq: common-score-def} that have been well studied~\cite{goudet2018learning, RL-CD} but DAG-ness independent, a descriptor which encodes the structure's own information in $F$ is required.

\subsection{Proposed Model}
In this section, we will introduce the details of our model CASPER, which defines a DAG-ness aware score function in a dynamic causal metric space and reshape the optimization scheme for differentiable DAG structure learning. This allows the gradients of the loss function to be optimized towards the direction of more accurate causal graph reconstruction. 
For clarity, we first present the definition of our causal space and its desirable properties. 
Afterward, we describe the approach of applying it to DAG structure learning.
\newline

\noindent \textbf{Dynamic Causal Space.} 
Most of the standard score-based differentiable algorithms tend to apply the same score on different causal structures (see the example in Figure \ref{fig:teaser}), leading to suboptimal graph construction when using observational data.
As a result, the DAG learners are error-prone to constructing spurious edges due to DAG-ness independent forms and model vulnerability~\cite{kaiser2021unsuitability, he2021daring}. 
To address these problems, our goal is to discover a novel form of score function, predefined in a specific metric space (\ie causal space), which encodes the DAG-ness information into the score function for DAG-ness-aware causal structure learning. 
We introduce the CASPER framework, as shown in Figure~\ref{fig:pipeline}, which aims to adaptively perceive the causal structure and facilitate more accurate gradient optimization.
Before formally introducing the causal space, we first present the following lemma and definition of the Lipshitz norm for convenience in later notations.
Let $\mathbf{W} \in \{0, 1 \}^{d\times d}$ denote the $\mathcal{G}$'s adjacency  matrix. Specifically, $\mathbf{W}_{ij}=1$ if the directed edge $X_j \rightarrow X_i$ exists in $\mathcal{G}$, otherwise $\mathbf{W}_{ij} =0$. The DAG lemma is formulated as:
\begin{lemma}A matrix $\mathbf{W} \in \mathbb{R}^{d \times d}$ is a DAG if and only if
\begin{equation}
    h(\mathcal{G})=\operatorname{tr}\left(e^{\mathbf{W} \circ \mathbf{W}}\right)-d=0, 
    \label{eq: hw_2018}
\end{equation}
where $\circ$ is the Hadamard product.
\label{lemma: hw_2018}
\end{lemma}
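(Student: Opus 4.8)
The plan is to convert the acyclicity characterization into a combinatorial statement about closed walks, exploiting the fact that the Hadamard square has only non-negative entries. Write $\mathbf{B} = \mathbf{W} \circ \mathbf{W}$, so that $\mathbf{B}_{ij} = \mathbf{W}_{ij}^2 \ge 0$ for all $i,j$, with $\mathbf{B}_{ij} > 0$ precisely when the directed edge $X_j \rightarrow X_i$ is present in $\mathcal{G}$. Thus $\mathbf{B}$ may be regarded as a non-negatively weighted adjacency matrix carrying exactly the same edge set as $\mathbf{W}$, and $\mathbf{W}$ is a DAG if and only if $\mathbf{B}$ encodes a graph with no directed cycle.

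Next I would expand the matrix exponential as the convergent power series $e^{\mathbf{B}} = \sum_{k \ge 0} \mathbf{B}^k / k!$, which is absolutely convergent so that the trace can be taken term by term. Since $\operatorname{tr}(\mathbf{B}^0) = \operatorname{tr}(I) = d$, this yields
\begin{equation}
    h(\mathcal{G}) = \operatorname{tr}\!\left(e^{\mathbf{B}}\right) - d = \sum_{k \ge 1} \frac{\operatorname{tr}(\mathbf{B}^k)}{k!}.
    \label{eq: trace-series}
\end{equation}
The core combinatorial step is then to interpret the diagonal entries of powers of $\mathbf{B}$: by the definition of matrix multiplication, $(\mathbf{B}^k)_{ii} = \sum_{i_1,\dots,i_{k-1}} \mathbf{B}_{i i_1}\mathbf{B}_{i_1 i_2}\cdots \mathbf{B}_{i_{k-1} i}$ equals the total weight of all closed directed walks of length $k$ based at node $i$. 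Consequently $\operatorname{tr}(\mathbf{B}^k)$ is the total weight of all closed walks of length $k$ in the graph, and crucially this quantity is non-negative because every factor $\mathbf{B}_{ab}$ is non-negative.

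With both ingredients in hand, the equivalence follows. Because every summand in \eqref{eq: trace-series} is non-negative, $h(\mathcal{G}) = 0$ holds if and only if $\operatorname{tr}(\mathbf{B}^k) = 0$ for every $k \ge 1$, i.e. if and only if the graph admits no closed directed walk of any positive length. A closed walk of positive length exists exactly when the graph contains a directed cycle, so the vanishing of $h$ is equivalent to acyclicity, which is the claim. (One may further observe that a finite graph on $d$ nodes has a cycle iff it has one of length at most $d$, so the terms $k = 1,\dots,d$ already suffice, but non-negativity makes even this refinement unnecessary for the equivalence.)

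The step I expect to be the main obstacle — and the reason the Hadamard product appears at all — is guaranteeing that no cancellation can conceal a cycle. Had we used $\operatorname{tr}(\mathbf{W}^k)$ directly, signed entries of $\mathbf{W}$ could make closed-walk weights cancel, so that the trace might vanish even in the presence of cycles, breaking the ``only if'' direction. Replacing $\mathbf{W}$ by $\mathbf{W}\circ\mathbf{W}$ forces all walk weights to be non-negative, which is exactly what makes the termwise non-negativity argument on \eqref{eq: trace-series} valid and the detection of cycles faithful.
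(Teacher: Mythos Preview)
Your proof is correct and is essentially the standard argument from the cited NOTEARS paper~\cite{notears2018}: pass to the entrywise non-negative matrix $\mathbf{B}=\mathbf{W}\circ\mathbf{W}$, expand $\operatorname{tr}(e^{\mathbf{B}})$ as a non-negative series whose $k$-th term counts weighted closed walks of length $k$, and conclude that the series vanishes exactly when no directed cycle exists. Note that the present paper does not supply its own proof of this lemma; it merely quotes the result from~\cite{notears2018}, so there is nothing further to compare against.
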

Lemma~\ref{lemma: hw_2018}~\cite{notears2018} uses the trace of matrix exponential with Hadamard product of $\mathbf{W}$ to quantify the DAG-ness. 
To ease the numerical difficulty of computing $\operatorname{tr}\left(e^{\mathbf{W} \circ \mathbf{W}}\right)$, Yu et al.~\cite{DAG-GNN-2019} adopt a more convenient form of $h$ function:
\begin{equation}
    h(\mathcal{G})=\operatorname{tr}\left[\left(I+\alpha(\mathbf{W} \circ \mathbf{W})^d\right)\right]-d, \quad \alpha>0.
\end{equation}
% We provide the proof and more discussions of Lemma~\ref{lemma: hw_2018} in the Appendix. 

\begin{definition}[Lipshitz norm]
   Let $\mathcal{M}_A$ and $\mathcal{M}_B$ be metric spaces. Let $\mathcal{T} : \mathcal{M}_A \rightarrow \mathcal{M}_B$ be a mapping function. The Lipshitz norm or Lipshitz modulus $||\mathcal{T}||_\text{Lip}$ of $\mathcal{T}$ is the supremum of the absolute difference quotients, \ie 
   \begin{equation}
\|\mathcal{T}\|_\text{Lip}:=\sup _{a \neq b, a, b \in \mathcal{M}_A} \frac{|\mathcal{T}(a)-\mathcal{T}(b)|}{\|a-b\|}.
\end{equation}
And we call the map $\mathcal{T}: \mathcal{M}_A \rightarrow \mathcal{M}_B$ Lipshitz continuous or imply Lipshitz if its Lipshitz norm is finite. 
\end{definition}
Guided by the aforementioned idea, we now formally give the definition of our dynamic causal space. 
\begin{definition}[Causal Space]
    Let $(\mathcal{S}, \mathcal{D})$ be a Polish space (complete metric space) for which Borel probability measure on $\mathcal{S}$ is a Radon measure. Let $\mathcal{P}(\mathcal{S})$ denote the collection of all probability measures $\nu$ on $\mathcal{S}$ with finite moment, that is, for any $z \in \mathcal{S}$, there exists some $z_0$ in $\mathcal{S}$ such that:
    \begin{equation}
\int_{\mathcal{S}} \mathcal{D}\left(z, z_0\right) d \nu(z)<\infty.
\end{equation}
For any $z_X, z_Y \in \mathcal{S}$ and let $P$ and $Q$ be the distribution of $z_X$ and $z_Y$. The distance in the space $\mathcal{S}$ between two probability measures $P$ and $Q$ in $\mathcal{P}(\mathcal{S})$ is defined as:
\begin{equation}
    \mathcal{D}_{\mathcal{S}}^{\mathcal{T}}(P, Q)=\sup _{\|\mathcal{T}\|_\text{Lip} \leq g(h(\mathcal{G}))}\left\{\mathbb{E}_{z_X \sim P}\left[\mathcal{T}\left(z_X\right)\right]-\mathbb{E}_{z_Y \sim Q}\left[\mathcal{T}\left(z_Y\right)\right]\right\},
\end{equation}
where $g$ is an increasing function, $h(\mathcal{G})$ is the DAG-ness function as explained in the part of Lemma~\ref{lemma: hw_2018}. $\mathcal{T}$ is a continuous mapping function $\mathcal{T}: \mathcal{S} \rightarrow \mathbb{R}$ and $||\cdot||_\text{Lip}$ is the Lipshitz norm. We call $\mathcal{S}$ \text{causal space} and $g(h(\mathcal{G}))$ structure-aware descriptor, which encodes the DAG-ness of the causal graph in causal space. And the distance $\mathcal{D}_{\mathcal{S}}^{\mathcal{T}}$ is called causal structure distance which defined in $\mathcal{S}$ with mapping function $\mathcal{T}$.
\label{def: causal space}
\end{definition}

Furthermore, our dynamic causal space has the following desirable properties:
\begin{itemize}
\item [(a)] The causal space $\mathcal{S}$ is a complete probability metric space that allows us to learn a DAG structure from a distributional view.
\item [(b)] DAG-ness information of a causal graph can be dynamically quantified by the smoothness of causal space through the process of structure learning.
\item [(c)] This space is noise-robust enough to observational data under "perturbation" (\ie additive noise).
\end{itemize}
Due to space limitations, we provide a detailed analysis and performance evaluation of these properties in the experiments presented in Section~\ref{sec:experiments}. 
Here, we offer some illustrative discussions regarding properties (a) to (c).
% Due to the limited space, we demonstrate the performance and detailed analysis of the above properties in experiments of Section~\ref{sec:experiments}. 
% Here, we give some illustrative discussions for property (a) to (c).
Property (a) implies that the causal structure distance defined in our causal space satisfies the axioms of a distance on Borel probability. 
This property allows us to capture the observational sampling distribution more faithfully to the DAG, particularly in real data, as demonstrated in our experiments.
% By property (a), we can better capture the observational sampling distribution faithful to the DAG, especially in real data, which will be shown in experiments. 
Property (b) enables us to incorporate DAG-ness information into the score function during the optimization process, leading to more precise DAG solutions. This property enhances the optimization process and improves the accuracy of the resulting DAG.
% Property (b) allows us to perceive the DAG-ness information in the score function during the process of optimization for a more precise DAG solution. 
% Then, property (c) makes our model more robust in heterogeneous data. 
Property (c) enhances the robustness of our model in handling heterogeneous data. 
Although there exists methods~\cite{gran-dag, DAG-GNN-2019} that achieve property (a) by measuring the SEMs from a static probabilistic view, they hardly satisfy (b) and (c).
These methods overlook the importance of dynamical structure information in the optimization and robustness of their models. 
Fortunately, our proposed dynamic causal space provides a comprehensive solution that satisfies all of the above properties.
% Fortunately, our proposed dynamic causal space is a good definition to meet the above characterizations, thus significantly improving the performance of causal discovery. 
\newline

\noindent \textbf{Learning DAG Structure in Causal Space.} Given the definition of causal space, we consider it a crucial criterion for differentiable score-based structure learning. Before delving into the learning process of DAG structures in the causal space, we provide the following characterization to guarantee the convergence of the structure learning process.
\begin{figure*}[!t]
    \centering
    \includegraphics[width=0.9\linewidth]{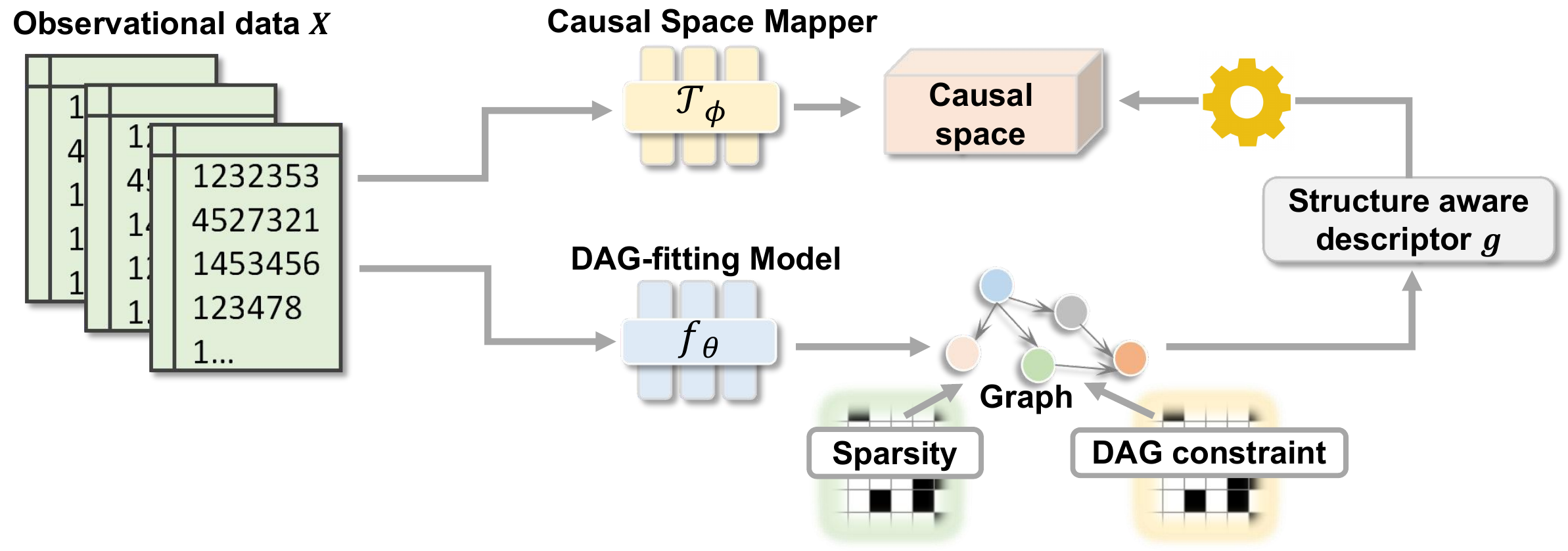}
    \caption{ Pipeline of Dynamic Causal Space for DAG Structure Learning (CASPER). Given observational data $\mathbf{X}$, we apply the causal space mapper $\mathcal{T}_\phi$ to encode data into causal space. Then we use the DAG-fitting model $f_\theta$ to optimize the causal graph with sparsity and DAG constraint. Finally, the DAG-ness information can be transmitted to the causal space through the structure-aware descriptor $g$. Thus the causal space is able to dynamically capture structural information and provide more accurate solutions.}
    % \vspace{-10pt}
    \label{fig:pipeline}
\end{figure*}
\begin{algorithm}[ht]
   \caption{CASPER Algorithm for DAG Structure Learning}
   \label{alg:casper}
\begin{algorithmic}
   \STATE {\bfseries Input:} observational data $\mathbf{X} = \{\mathbf{x}^{(k)}\}_{k=1}^{n}$ sampled from $P_r$ and threshold $\omega >0$, maximum epoch in the inner loop $K_{\text{inner}}$, maximum epoch in the outer loop $K_{\text{outer}}$
   \STATE {\bfseries Initialize:} initialize the parameters of causal fitting model $\theta$ and parameters of causal space model $\phi$
   \FOR{$t=0 \text{ to }\tau_0$}
   \STATE Update $\theta$ and $\mathcal{G}$ to minimize $F_\phi$ and get $\mathcal{G}^{\text{pre}}$
   \ENDFOR
   \FOR{$k_1=0 \text{ to }K_{\text{outer}}$}
   \STATE Fix causal space model parameters $\phi$
   \STATE Calculate $F_\phi(\mathbf{X}; \mathcal{G}, \theta) + \mathcal{L}_{\text{DAG}}(\mathcal{G})$ in Equation~\ref{eq: bilevel}
   \STATE Update $\theta$ and $\mathcal{G}$ to minimize $F_\phi + \mathcal{L}_{\text{DAG}}$
   \FOR{$k_2=0 \text{ to }K_{\text{inner}}$}
   \STATE Fix graph $\mathcal{G}$ and the causal fitting model's parameters $\theta$  
   \STATE Update $\phi$ to maximize $F_\phi(\mathbf{X}; \mathcal{G}, \theta)$ in Equation~\ref{eq: new-score}
   \STATE $c \leftarrow \log(1+h(\mathcal{G}))$
   \STATE $\phi \leftarrow \text{clip}(\phi, -c, c)$
   \ENDFOR
   \ENDFOR
   \STATE Prune the edges less than $\omega$ of $\mathcal{G}$
   \STATE {\bfseries return} predicted $\mathcal{G}$
\end{algorithmic}
\end{algorithm}

\begin{proposition}[Convergence of Causal Space] 
Let $P$ be a distribution on our causal space $\mathcal{S}$ and $\{P_n\}_{n \in \mathbb{P}}$ be a sequence of distributions on $\mathcal{S}$. Then, considering limits as $n\rightarrow \infty$, $P_n \xrightarrow{distribution} P$ if and only if $\mathcal{D}(P_n, P)\rightarrow 0$ in $\mathcal{S}$, where $\xrightarrow{distribution}$ represents convergence in distribution for random variables.
\label{thm: convergence}
\end{proposition}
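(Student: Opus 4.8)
The plan is to recognize the causal structure distance as a rescaled Kantorovich (Wasserstein-$1$) distance and then appeal to the classical fact that the latter metrizes weak convergence on a Polish space. First I would observe that, because $g$ is increasing and $h(\mathcal{G}) \ge 0$, the structure-aware descriptor $g(h(\mathcal{G}))$ is a strictly positive scalar for any fixed candidate graph; and since the constraint set $\{\mathcal{T} : \|\mathcal{T}\|_{\text{Lip}} \le c\}$ scales linearly in $c$, the supremum in Definition~\ref{def: causal space} factorizes as
\begin{equation}
    \mathcal{D}_{\mathcal{S}}^{\mathcal{T}}(P, Q) = g(h(\mathcal{G}))\, W_1(P, Q),
\end{equation}
where $W_1$ denotes the $1$-Wasserstein distance, whose Kantorovich--Rubinstein dual is $W_1(P,Q) = \sup_{\|\mathcal{T}\|_{\text{Lip}}\le 1}\{\mathbb{E}_{z_X\sim P}[\mathcal{T}(z_X)] - \mathbb{E}_{z_Y\sim Q}[\mathcal{T}(z_Y)]\}$. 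Because multiplying by the positive factor $g(h(\mathcal{G}))$ cannot change whether a sequence tends to zero, the claim $\mathcal{D}(P_n, P) \to 0 \iff P_n \xrightarrow{distribution} P$ reduces to the equivalence of $W_1(P_n, P) \to 0$ with weak convergence on $\mathcal{S}$.

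For the direction $\mathcal{D}(P_n, P) \to 0 \Rightarrow P_n \xrightarrow{distribution} P$, I would argue as follows. Every bounded Lipschitz test function lies, after rescaling, in the feasible class of the supremum, so $W_1(P_n, P) \to 0$ forces $\mathbb{E}_{P_n}[\varphi] \to \mathbb{E}_{P}[\varphi]$ for all bounded Lipschitz $\varphi$. Since the bounded Lipschitz functions are convergence-determining, the Portmanteau theorem immediately yields $P_n \xrightarrow{distribution} P$. This direction is routine.

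The converse $P_n \xrightarrow{distribution} P \Rightarrow \mathcal{D}(P_n, P) \to 0$ is where the real work lies. Here I would pass from the dual to the primal (coupling) form $W_1(P_n, P) = \inf_{\pi} \int_{\mathcal{S}\times\mathcal{S}} \mathcal{D}(x, y)\, d\pi(x, y)$, the infimum running over couplings $\pi$ with marginals $P_n$ and $P$. Using the Skorokhod representation theorem on the Polish space $(\mathcal{S}, \mathcal{D})$, I would realize $P_n$ and $P$ as the laws of random elements $Z_n$ and $Z$ on a common probability space with $Z_n \to Z$ almost surely, so that $\mathcal{D}(Z_n, Z) \to 0$ almost surely. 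Bounding $W_1(P_n, P) \le \mathbb{E}[\mathcal{D}(Z_n, Z)]$ via this coupling, it remains to upgrade the almost-sure convergence to $L^1$ convergence.

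The main obstacle is precisely this last step: $W_1$ tests against \emph{unbounded} Lipschitz functions, so weak convergence alone does not control the tails. I would close the gap with a uniform-integrability argument, which is exactly where the hypotheses of the causal space are essential — membership of every measure in $\mathcal{P}(\mathcal{S})$ guarantees the finite first moment $\int_{\mathcal{S}} \mathcal{D}(z, z_0)\, d\nu(z) < \infty$ of Definition~\ref{def: causal space}, and together with convergence of these first moments one obtains uniform integrability of $\{\mathcal{D}(Z_n, z_0)\}_n$. A truncation argument then splits $\mathbb{E}[\mathcal{D}(Z_n, Z)]$ into a part that vanishes by bounded convergence and a tail part made arbitrarily small by uniform integrability, giving $\mathbb{E}[\mathcal{D}(Z_n, Z)] \to 0$ and hence $W_1(P_n, P) \to 0$. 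The completeness of $(\mathcal{S}, \mathcal{D})$ (the Polish assumption) is what legitimizes the Skorokhod construction and, as noted in property (a), makes $(\mathcal{P}(\mathcal{S}), \mathcal{D}_{\mathcal{S}}^{\mathcal{T}})$ itself a complete probability metric space, tying the convergence statement back to the structure of the causal space.
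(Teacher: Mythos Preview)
Your argument is sound in outline but takes a genuinely different route from the paper for the hard direction $P_n \xrightarrow{distribution} P \Rightarrow \mathcal{D}(P_n,P)\to 0$. The paper never leaves the dual (Kantorovich--Rubinstein) side: it extracts a subsequence realizing $\limsup_n \mathcal{D}(P_n,P)$, chooses optimal Lipschitz witnesses $\mathcal{T}_{n_k}$ for each term, normalizes them to vanish at a common point so that they form an equicontinuous family, and applies the Arzel\`a--Ascoli theorem to pass to a uniformly convergent sub-subsequence $\mathcal{T}_{n_k}\to\mathcal{T}$; weak convergence of the measures combined with uniform convergence of the test functions then forces $\int \mathcal{T}_{n_k}\,d(P_{n_k}-P)\to 0$. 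You instead switch to the primal coupling formulation, invoke Skorokhod to realize $P_n,P$ on a common probability space with almost-sure convergence, and close with a uniform-integrability/truncation argument. Both are standard proofs that $W_1$ metrizes weak convergence; the paper's compactness-of-test-functions route is shorter but buries the moment issue in the ``hence bounded'' claim (Lipschitz functions vanishing at a point are bounded only on bounded sets, so Arzel\`a--Ascoli tacitly needs compactness or tightness), whereas your coupling route makes transparent exactly where the finite-first-moment hypothesis of Definition~\ref{def: causal space} enters.

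Two small gaps in your reduction. First, you assert that $g$ increasing and $h(\mathcal{G})\ge 0$ force $g(h(\mathcal{G}))>0$; with the paper's concrete choice $g(x)=\log(1+x)$ this fails precisely when $h(\mathcal{G})=0$, i.e.\ when $\mathcal{G}$ is already a DAG, and then $\mathcal{D}\equiv 0$ so the ``only if'' direction becomes vacuous---you should restrict to $h(\mathcal{G})>0$ or note the degeneracy. Second, you correctly identify that weak convergence alone cannot deliver $W_1\to 0$ on a non-compact Polish space, but then invoke ``convergence of these first moments'' as an input to uniform integrability; that convergence is not part of the proposition's hypotheses, so it must be stated as an extra assumption rather than derived (the paper's Arzel\`a--Ascoli step conceals the same gap).
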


\begin{proof}
Let us start from a sequence $\{P_n\}$ such that $\mathcal{D}(P_n, P) \rightarrow 0$. Based on the definition~\ref{def: causal space} of Causal Space, for every $\mathcal{T}\in \text{Lip}_{g(h(\mathcal{G}))}$, we have $\int \mathcal{T}(P_n - P) \rightarrow 0$. And the same is true for any Lipshitz function. 
Then, we fix a subsequence $\{P_{n_k}\}$ that satisfies $\lim_k \mathcal{D}(P_{n_k})=\lim \sup_n \mathcal{D}(P_n, P)$. For each $k$, we pick a function $\mathcal{T}_{n_k}\in \text{Lip}_{g(h(\mathcal{G}))}$ such that $\int \mathcal{T}_{n_k}(P_{n_k}-P) = \mathcal{D}(P_{n_k}, P)$. Up to adding a constant, which does not affect the integral, we can assume that the $\mathcal{T}_{n_k}$ all vanish at the same point, and they are hence bounded and equicontinuous. By Arzela-Ascoli theorem~\cite{green1961arzela}, we can extract a sub-sequence uniformly converging to a certain $\mathcal{T} \in \text{Lip}_{g(h(\mathcal{G}))}$. By replacing the original subsequence with this new one, we now have:
\begin{equation}
    \mathcal{D}(P_{n_k}, P) = \int \mathcal{T}_{n_k}d(P_{n_k}-P) \rightarrow \int \mathcal{T}d(P-P)=0,
\end{equation}
where the convergence of the integral is justified by the distributional convergence $P_{n_k} \rightarrow P$ together with the strong convergence in continuous function $\mathcal{T}_{n_k} \rightarrow \mathcal{T}$. It shows that $\lim \sup_n \mathcal{D}(P_n, P) \rightarrow 0$ and concludes the proof.
\end{proof}

Proposition~\ref{thm: convergence} provides a good demonstration of the convergence in the causal space we proposed, which leads to theoretical guarantees for our optimization process. 

Formally, we cast the overall framework of CASPER to learn DAG structure in the causal space and boost causal discovery. Given observational data $\mathbf{X}$ sampled from $P_r$, the DAG-ness-aware score function defined in causal space $\mathcal{S}$ is:
\begin{equation}
    F_\phi(\mathbf{X}; \mathcal{G}, \theta) =  \left \{\mathbb{E}_{\mathbf{X}\sim P_r}[(\mathcal{T}_\phi(\mathbf{X}))] - \mathbb{E}_{\hat{\mathbf{X}}\sim P_\theta}[(\mathcal{T}_\phi(\hat{\mathbf{X}})] \right\} + \lambda \mathcal{R}_{\text{sparse}}(\mathcal{G}),
    \label{eq: new-score}
\end{equation}
where $\mathcal{T}_\phi$ is the causal space mapping function parameterized by $\phi$ and $\mathcal{R}_{\text{sparse}}(\mathcal{G})$ is the graph sparsity regularization term by $l_1$ norm in practice. $\hat{\mathbf{X}}$ is recovered through the data generative process of $\mathbf{X}$ by learnable DAG-fitting model $f$ with parameter set $\theta$, \ie $\hat{\mathbf{X}} = f(\mathbf{X}; \theta)$. 
Then we cast the overall framework of CASPER to learn DAG structure as the following bilevel optimization problem:
\begin{equation}
    \begin{array}{cc} 
& \min \limits_{\mathcal{G}, \theta} F_{\phi^{*}}(\mathbf{X}; \mathcal{G}, \theta) + \mathcal{L}_{\text{DAG}}(\mathcal{G}, \alpha_t, \mu_t)\\
\text { s.t. } & \phi^{*} \in 
{ \underset {\phi \in \mathcal{C}(\mathcal{G})} { \operatorname {arg\,max} } \, F_\phi(\mathbf{X}; \mathcal{G}, \theta) },
\end{array}
\label{eq: bilevel}
\end{equation}
where $\mathcal{C}(\mathcal{G}):=\left\{ \phi: \mathcal{T}_\phi \text{ is continuous, } ||\mathcal{T}_\phi||_{\text{Lip}} \leq g(h(\mathcal{G}))\right\}$ and $g(\cdot)$ is an increasing function which is $g(x) = \log (1+x)$ for implementation.
More specifically, Equation~\eqref{eq: bilevel} consists of two terms, where the inner-level objective (\ie optimize $\phi$ by maximizing $F_\phi$ to compute the causal structure distance in causal space) is nested within the outer-level objective (\ie optimize $\mathcal{G}$ and $\theta$ by minimizing the score function). We notice that solving the outer-level problem should be subject to the optimal value of the inner-level problem. For better convergence, we can pretrain the $\mathcal{G}$ and $\theta$ according to $F_\phi$ for a few epochs at first.

% Please add the following required packages to your document preamble:
% \usepackage{graphicx}
\begin{table*}[ht]
\centering
\caption{Linear Setting, for ER graphs of 10, 20, 50 nodes.}
\vspace{-10pt}
\resizebox{\textwidth}{!}{%
\begin{tabular}{cc|cccc|cccc|cccc}

\toprule
 &
   &
  \multicolumn{4}{c|}{10 nodes} &
  \multicolumn{4}{c|}{20 nodes} &
  \multicolumn{4}{c}{50 nodes} \\
\multicolumn{2}{c|}{ER2} & 
  TPR$\uparrow$ &
  FDR$\downarrow$ &
  SHD$\downarrow$ &
  SID$\downarrow$ &
  TPR$\uparrow$ &
  FDR$\downarrow$ &
  SHD$\downarrow$ &
  SID$\downarrow$ &
  TPR$\uparrow$ &
  FDR$\downarrow$ &
  SHD$\downarrow$ &
  SID$\downarrow$ \\ \midrule
\multicolumn{2}{c|}{Random} &
  0.08\footnotesize{$\pm$0.07} &
  0.93\footnotesize{$\pm$0.18} &
  33.2\footnotesize{$\pm$7.3} &
  95.6\footnotesize{$\pm$12.2} &
  0.11\footnotesize{$\pm$0.09} &
  0.89\footnotesize{$\pm$0.08} &
  56.8\footnotesize{$\pm$8.7} &
  292.3\footnotesize{$\pm$45.7} &
  0.04\footnotesize{$\pm$0.02} &
  0.90\footnotesize{$\pm$0.03} &
  397.3\footnotesize{$\pm$12.7} &
  1,082.0 \footnotesize{$\pm$182.2} \\
\multicolumn{2}{c|}{NOTEARS} &
  0.82\footnotesize{$\pm$0.07} &
  0.09\footnotesize{$\pm$0.05} &
  5.4\footnotesize{$\pm$1.6} &
  16.6\footnotesize{$\pm$5.8} &
  0.82\footnotesize{$\pm$0.09} &
  0.13\footnotesize{$\pm$0.04} &
  9.4\footnotesize{$\pm$4.1} &
  59.4\footnotesize{$\pm$10.7} &
  0.79\footnotesize{$\pm$0.06} &
  0.19\footnotesize{$\pm$0.03} &
  27.6\footnotesize{$\pm$7.7} &
  427.0\footnotesize{$\pm$186.1} \\
\multicolumn{2}{c|}{DAG-GNN} &
  0.83\footnotesize{$\pm$0.05} &
  0.12\footnotesize{$\pm$0.05} &
  4.8\footnotesize{$\pm$1.1} &
  12.9\footnotesize{$\pm$6.2} &
  0.83\footnotesize{$\pm$0.02} &
  0.13\footnotesize{$\pm$0.02} &
  8.7\footnotesize{$\pm$2.5} &
  48.5\footnotesize{$\pm$5.3} &
  0.81\footnotesize{$\pm$0.03} &
  0.13\footnotesize{$\pm$0.02} &
  24.3\footnotesize{$\pm$5.5} &
  334.2\footnotesize{$\pm$120.3} \\
\multicolumn{2}{c|}{NoCurl} &
  0.84\footnotesize{$\pm$0.04} &
  0.13\footnotesize{$\pm$0.03} &
  4.6\footnotesize{$\pm$1.3} &
  13.2\footnotesize{$\pm$5.1} &
  0.82\footnotesize{$\pm$0.05} &
  0.15\footnotesize{$\pm$0.05} &
  8.9\footnotesize{$\pm$3.4} &
  50.1\footnotesize{$\pm$6.7} &
  0.78\footnotesize{$\pm$0.07} &
  0.15\footnotesize{$\pm$0.03} &
  25.2\footnotesize{$\pm$6.0} &
  356.7\footnotesize{$\pm$165.2} \\
\multicolumn{2}{c|}{GraN-DAG} &
  0.82\footnotesize{$\pm$0.03} &
  0.08\footnotesize{$\pm$0.01} &
  5.2\footnotesize{$\pm$0.9} &
  14.8\footnotesize{$\pm$4.9} &
  0.80\footnotesize{$\pm$0.06} &
  0.14\footnotesize{$\pm$0.01} &
  8.5\footnotesize{$\pm$2.9} &
  47.2\footnotesize{$\pm$8.0} &
  0.82\footnotesize{$\pm$0.05} &
  0.12\footnotesize{$\pm$0.01} &
  24.8\footnotesize{$\pm$7.6} &
  289.1\footnotesize{$\pm$118.3} \\
\multicolumn{2}{c|}{DARING} &
  0.85\footnotesize{$\pm$0.02} &
  0.10\footnotesize{$\pm$0.01} &
  4.3\footnotesize{$\pm$1.7} &
  13.4\footnotesize{$\pm$4.5} &
  0.84\footnotesize{$\pm$0.05} &
  0.16\footnotesize{$\pm$0.02} &
  8.9\footnotesize{$\pm$3.0} &
  46.7\footnotesize{$\pm$6.5} &
  0.83\footnotesize{$\pm$0.06} &
  0.13\footnotesize{$\pm$0.02} &
  23.5\footnotesize{$\pm$6.2} &
  310.8\footnotesize{$\pm$159.6} \\
\multicolumn{2}{c|}{\textbf{CASPER(Ours)}} &
  \textbf{0.90\footnotesize{$\pm$0.04}} &
  \textbf{0.07\footnotesize{$\pm$0.02}} &
  \textbf{3.8\footnotesize{$\pm$0.8}} &
  \textbf{11.6\footnotesize{$\pm$4.3}} &
  \textbf{0.89\footnotesize{$\pm$0.09}} &
  \textbf{0.10\footnotesize{$\pm$0.03}} &
  \textbf{7.8\footnotesize{$\pm$3.7}} &
  \textbf{42.4\footnotesize{$\pm$7.2}} &
  \textbf{0.87\footnotesize{$\pm$0.05}} &
  \textbf{0.12\footnotesize{$\pm$0.03}} &
  \textbf{21.8\footnotesize{$\pm$5.8}} &
  \textbf{230.4\footnotesize{$\pm$119.8}} \\ \midrule \midrule
\multicolumn{2}{c|}{ER4} &
  TPR$\uparrow$ &
  FDR$\downarrow$ &
  SHD$\downarrow$ &
  SID$\downarrow$ &
  TPR$\uparrow$ &
  FDR$\downarrow$ &
  SHD$\downarrow$ &
  SID$\downarrow$ &
  TPR$\uparrow$ &
  FDR$\downarrow$ &
  SHD$\downarrow$ &
  SID$\downarrow$ \\ \midrule 
\multicolumn{2}{c|}{Random} &
  0.09\footnotesize{$\pm$0.17} &
  0.93\footnotesize{$\pm$0.09} &
  52.3\footnotesize{$\pm$16.7} &
  80.3\footnotesize{$\pm$17.7} &
  0.07\footnotesize{$\pm$0.03} &
  0.90\footnotesize{$\pm$0.08} &
  86.9\footnotesize{$\pm$7.0} &
  387.5\footnotesize{$\pm$52.3} &
  0.09\footnotesize{$\pm$0.08} &
  0.92\footnotesize{$\pm$0.08} &
  998.2\footnotesize{$\pm$45.9} &
  3,399.1\footnotesize{$\pm$489.2} \\
\multicolumn{2}{c|}{NOTEARS} &
  0.83\footnotesize{$\pm$0.06} &
  0.08\footnotesize{$\pm$0.03} &
  7.4\footnotesize{$\pm$2.7} &
  28.4\footnotesize{$\pm$5.8} &
  0.75\footnotesize{$\pm$0.01} &
  0.28\footnotesize{$\pm$0.05} &
  32.0\footnotesize{$\pm$5.4} &
  152.8\footnotesize{$\pm$27.0} &
  0.51\footnotesize{$\pm$0.12} &
  \textbf{0.27\footnotesize{$\pm$0.10}} &
  113.4\footnotesize{$\pm$29.5} &
  943.8\footnotesize{$\pm$172.2} \\
\multicolumn{2}{c|}{DAG-GNN} &
  0.82\footnotesize{$\pm$0.07} &
  0.12\footnotesize{$\pm$0.01} &
  7.0\footnotesize{$\pm$1.6} &
  29.4\footnotesize{$\pm$3.3} &
  0.81\footnotesize{$\pm$0.02} &
  0.25\footnotesize{$\pm$0.04} &
  29.5\footnotesize{$\pm$3.3} &
  138.4\footnotesize{$\pm$18.9} &
  0.55\footnotesize{$\pm$0.09} &
  0.28\footnotesize{$\pm$0.08} &
  115.2\footnotesize{$\pm$25.4} &
  835.3\footnotesize{$\pm$154.1} \\
\multicolumn{2}{c|}{NoCurl} &
  0.86\footnotesize{$\pm$0.10} &
  0.07\footnotesize{$\pm$0.02} &
  6.5\footnotesize{$\pm$2.3} &
  26.0\footnotesize{$\pm$4.9} &
  0.79\footnotesize{$\pm$0.03} &
  0.27\footnotesize{$\pm$0.03} &
  31.3\footnotesize{$\pm$2.1} &
  142.0\footnotesize{$\pm$14.9} &
  0.59\footnotesize{$\pm$0.10} &
  0.29\footnotesize{$\pm$0.06} &
  105.7\footnotesize{$\pm$26.2} &
  910.5\footnotesize{$\pm$129.0} \\
\multicolumn{2}{c|}{GraN-DAG} &
  0.84\footnotesize{$\pm$0.04} &
  0.06\footnotesize{$\pm$0.03} &
  7.8\footnotesize{$\pm$2.1} &
  25.5\footnotesize{$\pm$5.0} &
  0.78\footnotesize{$\pm$0.03} &
  0.26\footnotesize{$\pm$0.04} &
  29.7\footnotesize{$\pm$3.4} &
  143.5\footnotesize{$\pm$17.0} &
  0.52\footnotesize{$\pm$0.08} &
  0.31\footnotesize{$\pm$0.05} &
  110.3\footnotesize{$\pm$23.4} &
  854.3\footnotesize{$\pm$178.5} \\
\multicolumn{2}{c|}{DARING} &
  0.83\footnotesize{$\pm$0.06} &
  0.09\footnotesize{$\pm$0.01} &
  6.8\footnotesize{$\pm$1.8} &
  27.8\footnotesize{$\pm$3.5} &
  0.80\footnotesize{$\pm$0.02} &
  0.24\footnotesize{$\pm$0.02} &
  29.3\footnotesize{$\pm$2.0} &
  139.1\footnotesize{$\pm$15.4} &
  0.50\footnotesize{$\pm$0.12} &
  0.33\footnotesize{$\pm$0.05} &
  118.9\footnotesize{$\pm$27.0} &
  809.4\footnotesize{$\pm$165.3} \\
\multicolumn{2}{c|}{\textbf{CASPER(Ours)}} &
  \textbf{0.88\footnotesize{$\pm$0.05}} &
  \textbf{0.06\footnotesize{$\pm$0.04}} &
  \textbf{6.2\footnotesize{$\pm$2.1}} &
  \textbf{25.0\footnotesize{$\pm$2.7}} &
  \textbf{0.85\footnotesize{$\pm$0.03}} &
  \textbf{0.19\footnotesize{$\pm$0.02}} &
  \textbf{27.5\footnotesize{$\pm$2.9}} &
  \textbf{132.0\footnotesize{$\pm$16.3}} &
  \textbf{0.63\footnotesize{$\pm$0.10}} &
  0.29\footnotesize{$\pm$0.10} &
  \textbf{98.4\footnotesize{$\pm$31.1}} &
  \textbf{735.0\footnotesize{$\pm$160.2}} \\ \bottomrule
\end{tabular}%
}

\label{tab:linear-ER}
\end{table*}
Now we introduce how to solve the bilevel optimization in Equation~\eqref{eq: bilevel} in detail. In the inner loop, we fix the DAG-fitting model which predicts the data generative process of $\mathbf{X}$ and then update $\phi$ to maximize the score function $F_\phi$ to compute the causal structure distance in causal space $\mathcal{S}$. In the outer loop, upon the parameters of causal space mapping function $\phi$ is determined in the inner loop, we minimize the score function to optimize the DAG-fitting model. By alternately training the inner and outer loops, the score function can adaptively aware the causal structure in causal space, thus leading to more accurate gradient optimization and faster convergence to the optimal solution.
Our CASPER algorithm is summarized in the Algorithm~\ref{alg:casper}.

\section{Experiments} \label{sec:experiments}
In this section, we conduct extensive experiments to answer the research questions:
\begin{itemize}[leftmargin=*]
\item \textbf{RQ1: } How does CASPER perform compared to the previous methods in both linear and nonlinear settings?
\item \textbf{RQ2: } How do CASPER and other baselines perform with various factors (\ie noise scales, graph density)?  % space factor
\item \textbf{RQ2: } How does CASPER perform on real heterogeneous data compared with other applicable baselines?
\end{itemize}

\subsection{Experimental Settings}
\noindent \textbf{Baselines.} To answer the first and second question (RQ1 \& RQ2) , we select six state-of-the-art causal discovery methods as baselines for comparison:
\begin{itemize}[leftmargin=*]
    \item \textbf{NOTEARS~\cite{notears2018}} is specifically designed for linear settings and estimates the true causal graph by minimizing the fixed reconstruction loss with the continuous acyclicity constraint.
    \item \textbf{NOTEARS-MLP~\cite{notears-mlp}} is an extension of NOTEARS~\cite{notears2018} for nonlinear settings, which aims to approximate the generative structural equation model (\ie Equation~\eqref{eq: SEM}) by MLP while only utilizing the continuous acyclicity constraint to the first layer of the MLP.
    \item \textbf{DAG-GNN~\cite{DAG-GNN-2019}} reformulates DAG structure learning with variational autoencoder, where both encoder and decoder are graph neural networks. 
    % Choosing the evidence lower bound as the score function, DAG-GNN can also manage to recover causal structure well.
    By selecting the evidence lower bound as the score function, DAG-GNN is capable of effectively recovering the causal structure.
    \item \textbf{NoCurl~\cite{NoCurl}} utilizes a two-step procedure: initialize a cyclic solution first and then employ Hodge decomposition of graphs and learn a DAG structure by projecting the cyclic graph to the gradient of a potential function.
    \item \textbf{GraN-DAG~\cite{gran-dag}} adapts the constrained optimization formulation to allow for nonlinear relationships also by neural networks and makes use of the final pruning step to remove spurious edges.
    \item \textbf{DARING~\cite{he2021daring}} 
    introduces an adversarial learning strategy to impose an explicit residual independence constraint, aiming to improve the learning of acyclic graphs.
    % imposes explicit residual independence constraint with an adversarial learning strategy to learn the acyclic graph better.
\end{itemize}
To comprehensively demonstrate the effectiveness of our proposed CASPER, extensive experiments are conducted with more baselines on the real heterogeneous data (RQ3). In addition to the baselines mentioned above, we further implement CD-NOD~\cite{heterogeneous-2020}, FGS~\cite{FGS-2017}, ICA-LiNGAM~\cite{ICA-LiNGAM} GOLEM~\cite{GOLEM}, and GES~\cite{GES} in the real-world benchmark dataset, \ie Sachs~\cite{sachs2005causal}. 
\newline 

\noindent \textbf{Hyperparameter Settings.} For linear settings, there are two main hyper-parameters, the sparsity coefficient $\lambda_1$ for the $l_1$-norm regularization term; $K_{\text{inner}}$ in Algorithm~\ref{alg:casper} for inner loops as we choose the same stop condition as NOTEARS~\cite{notears2018} to replace $K_\text{outer}$ for the parameter-free. We tune $\lambda_1$ in \{0.002, 0.005, 0.01, 0.015, 0.02, 0.03, 0.09, 0.1, 0.25 \} and tune $K_\text{inner}$ in \{1,2,3,4,5,6,7,8,9,10\}. For nonlinear settings, there are three main hyper-parameters in total: $\lambda_1, \lambda_2, K_\text{inner}$, among which $\lambda_1$ and $\lambda_2$ are for the $l_1$-norm and $l_2$-norm regularization terms respectively. And we follow the same tuning strategy in linear settings to tune the three hyper-parameters. We find that often $\lambda=0.01, K_\text{inner}=3$ wor well. In practice, we adopt multilayer perception (MLP) with parameters $\theta$ and $\phi$ to approximate $f_\theta$ and $\mathcal{T}_\phi$. More details of the network design will be open-sourced upon acceptance. As the training process is the augmented Lagrangian problem, we follow the same optimization of Lagrangian coefficients as NOTEARS for a fair comparison. Besides, 
following the convention in NOTEARS-based methods~\cite{notears2018, notears-mlp, he2021daring}, we adopt the same post-processing strategy for all the methods, cutting off the edges with values less than $0.3$. 
\newline

\noindent \textbf{Evaluation Metrics.} To evaluate the DAG structure learning, four metrics are reported: True Positive Rate (TPR), False Discovery Rate (FDR), Structural Hamming Distance (SHD), and Structural Intervention Distance (SID)~\cite{SID}, averaged over ten random trails. The SHD simply counts the number of missing, falsely detected, or reversed edges. And the SID is especially well suited for causal inference since it counts the number of couples $(i, j)$ such that the interventional distribution $p(x_j|do(X_i=\overline{x}))$ would be miscalculated if we use the estimated graph to form the parent adjustment set.
Higher TPR stands for better performance, while FDR, SHD, and SID should be lower to represent a better estimate of the target causal graph. 
% Please add the following required packages to your document preamble:
% \usepackage{graphicx}
\begin{table*}[!t]
\centering
\caption{Nonlinear Setting, for ER graphs of 10, 20, 50 nodes.}
\vspace{-10pt}
\resizebox{\textwidth}{!}{%
\begin{tabular}{cc|cccc|cccc|cccc}
\toprule
 &
   &
  \multicolumn{4}{c|}{10 nodes} &
  \multicolumn{4}{c|}{20 nodes} &
  \multicolumn{4}{c}{50 nodes} \\
\multicolumn{2}{c|}{ER2} & 
  TPR$\uparrow$ &
  FDR$\downarrow$ &
  SHD$\downarrow$ &
  SID$\downarrow$ &
  TPR$\uparrow$ &
  FDR$\downarrow$ &
  SHD$\downarrow$ &
  SID$\downarrow$ &
  TPR$\uparrow$ &
  FDR$\downarrow$ &
  SHD$\downarrow$ &
  SID$\downarrow$ \\ \midrule
\multicolumn{2}{c|}{Random} &
  0.06\footnotesize{$\pm$0.07} &
  0.94\footnotesize{$\pm$0.18} &
  35.2\footnotesize{$\pm$7.3} &
  95.6\footnotesize{$\pm$12.2} &
  0.08\footnotesize{$\pm$0.09} &
  0.89\footnotesize{$\pm$0.07} &
  59.8\footnotesize{$\pm$9.7} &
  392.3\footnotesize{$\pm$48.7} &
  0.04\footnotesize{$\pm$0.02} &
  0.92\footnotesize{$\pm$0.03} &
  486.3\footnotesize{$\pm$23.7} &
  1,134.2 \footnotesize{$\pm$210.3} \\
\multicolumn{2}{c|}{NOTEARS-MLP} &
  0.75\footnotesize{$\pm$0.12} &
  0.16\footnotesize{$\pm$0.09} &
  7.6\footnotesize{$\pm$2.3} &
  18.3\footnotesize{$\pm$9.1} &
  0.71\footnotesize{$\pm$0.12} &
  0.16\footnotesize{$\pm$0.08} &
  15.3\footnotesize{$\pm$6.1} &
  99.3\footnotesize{$\pm$18.4} &
  0.37\footnotesize{$\pm$0.03} &
  0.19\footnotesize{$\pm$0.07} &
  70.5\footnotesize{$\pm$8.7} &
  892.5\footnotesize{$\pm$146.4} \\
\multicolumn{2}{c|}{DAG-GNN} &
  0.81\footnotesize{$\pm$0.09} &
  0.14\footnotesize{$\pm$0.08} &
  7.0\footnotesize{$\pm$2.1} &
  14.1\footnotesize{$\pm$6.3} &
  0.78\footnotesize{$\pm$0.09} &
  0.12\footnotesize{$\pm$0.03} &
  10.1\footnotesize{$\pm$5.8} &
  80.3\footnotesize{$\pm$12.6} &
  0.41\footnotesize{$\pm$0.07} &
  0.23\footnotesize{$\pm$0.05} &
  59.2\footnotesize{$\pm$6.5} &
  698.4\footnotesize{$\pm$103.5} \\
\multicolumn{2}{c|}{NoCurl} &
  0.80\footnotesize{$\pm$0.07} &
  0.17\footnotesize{$\pm$0.07} &
  6.7\footnotesize{$\pm$2.4} &
  15.3\footnotesize{$\pm$5.0} &
  0.72\footnotesize{$\pm$0.12} &
  0.19\footnotesize{$\pm$0.03} &
  12.5\footnotesize{$\pm$4.3} &
  77.9\footnotesize{$\pm$12.3} &
  0.49\footnotesize{$\pm$0.05} &
  0.18\footnotesize{$\pm$0.06} &
  69.8\footnotesize{$\pm$7.4} &
  733.5\footnotesize{$\pm$130.4} \\
\multicolumn{2}{c|}{GraN-DAG} &
  0.83\footnotesize{$\pm$0.05} &
  0.12\footnotesize{$\pm$0.05} &
  5.1\footnotesize{$\pm$1.9} &
  11.5\footnotesize{$\pm$3.4} &
  0.81\footnotesize{$\pm$0.14} &
  0.16\footnotesize{$\pm$0.09} &
  9.9\footnotesize{$\pm$4.6} &
  65.4\footnotesize{$\pm$11.7} &
  0.52\footnotesize{$\pm$0.04} &
  0.14\footnotesize{$\pm$0.04} &
  52.8\footnotesize{$\pm$8.6} &
  635.4\footnotesize{$\pm$172.8} \\
\multicolumn{2}{c|}{DARING} &
  0.79\footnotesize{$\pm$0.09} &
  0.21\footnotesize{$\pm$0.03} &
  7.7\footnotesize{$\pm$3.1} &
  18.2\footnotesize{$\pm$4.8} &
  0.73\footnotesize{$\pm$0.07} &
  0.20\footnotesize{$\pm$0.06} &
  13.6\footnotesize{$\pm$5.2} &
  88.3\footnotesize{$\pm$24.4} &
  0.50\footnotesize{$\pm$0.07} &
  0.13\footnotesize{$\pm$0.05} &
  57.4\footnotesize{$\pm$9.3} &
  745.2\footnotesize{$\pm$120.6} \\
\multicolumn{2}{c|}{\textbf{CASPER(Ours)}} &
  \textbf{0.87\footnotesize{$\pm$ 0.13}} &
  \textbf{0.11\footnotesize{$\pm$0.07}} &
  \textbf{3.5\footnotesize{$\pm$1.8}} &
  \textbf{7.8\footnotesize{$\pm$4.1}} &
  \textbf{0.85\footnotesize{$\pm$0.08}} &
  \textbf{0.09\footnotesize{$\pm$0.04}} &
  \textbf{7.9\footnotesize{$\pm$1.5}} &
  \textbf{55.1\footnotesize{$\pm$9.8}} &
  \textbf{0.59\footnotesize{$\pm$0.06}} &
  \textbf{0.11\footnotesize{$\pm$0.03}} &
  \textbf{45.2\footnotesize{$\pm$6.0}} &
  \textbf{584.3\footnotesize{$\pm$102.7}} \\ \midrule \midrule
\multicolumn{2}{c|}{ER4} &
  TPR$\uparrow$ &
  FDR$\downarrow$ &
  SHD$\downarrow$ &
  SID$\downarrow$ &
  TPR$\uparrow$ &
  FDR$\downarrow$ &
  SHD$\downarrow$ &
  SID$\downarrow$ &
  TPR$\uparrow$ &
  FDR$\downarrow$ &
  SHD$\downarrow$ &
  SID$\downarrow$ \\ \midrule 
\multicolumn{2}{c|}{Random} &
  0.07\footnotesize{$\pm$0.16} &
  0.94\footnotesize{$\pm$0.09} &
  51.4\footnotesize{$\pm$15.7} &
  82.3\footnotesize{$\pm$17.7} &
  0.06\footnotesize{$\pm$0.04} &
  0.93\footnotesize{$\pm$0.18} &
  96.8\footnotesize{$\pm$6.9} &
  392.1\footnotesize{$\pm$42.3} &
  0.07\footnotesize{$\pm$0.07} &
  0.92\footnotesize{$\pm$0.05} &
  1,198.2\footnotesize{$\pm$54.2} &
  4,065.8\footnotesize{$\pm$584.2} \\
\multicolumn{2}{c|}{NOTEARS-MLP} &
  0.83\footnotesize{$\pm$0.14} &
  0.23\footnotesize{$\pm$0.03} &
  10.5\footnotesize{$\pm$1.9} &
  28.5\footnotesize{$\pm$11.2} &
  0.48\footnotesize{$\pm$0.09} &
  0.27\footnotesize{$\pm$0.05} &
  55.6\footnotesize{$\pm$9.3} &
  174.5\footnotesize{$\pm$32.1} &
  0.28\footnotesize{$\pm$0.08} &
  0.12\footnotesize{$\pm$0.06} &
  158.2\footnotesize{$\pm$10.4} &
  1,603.5\footnotesize{$\pm$88.9} \\
\multicolumn{2}{c|}{DAG-GNN} &
  0.87\footnotesize{$\pm$0.13} &
  0.18\footnotesize{$\pm$0.03} &
  6.8\footnotesize{$\pm$1.3} &
  18.7\footnotesize{$\pm$4.8} &
  0.52\footnotesize{$\pm$0.03} &
  0.21\footnotesize{$\pm$0.12} &
  49.2\footnotesize{$\pm$10.2} &
  150.3\footnotesize{$\pm$32.7} &
  0.43\footnotesize{$\pm$0.06} &
  0.15\footnotesize{$\pm$0.04} &
  150.4\footnotesize{$\pm$7.2} &
  1,536.9\footnotesize{$\pm$90.4} \\
\multicolumn{2}{c|}{NoCurl} &
  0.79\footnotesize{$\pm$0.08} &
  0.24\footnotesize{$\pm$0.06} &
  8.5\footnotesize{$\pm$3.5} &
  15.2\footnotesize{$\pm$7.7} &
  0.43\footnotesize{$\pm$0.05} &
  0.23\footnotesize{$\pm$0.06} &
  53.3\footnotesize{$\pm$8.4} &
  167.9\footnotesize{$\pm$34.4} &
  0.33\footnotesize{$\pm$0.04} &
  0.14\footnotesize{$\pm$0.06} &
  140.3\footnotesize{$\pm$7.6} &
  1,468.5\footnotesize{$\pm$100.2} \\
\multicolumn{2}{c|}{GraN-DAG} &
  0.90\footnotesize{$\pm$0.10} &
  \textbf{0.14\footnotesize{$\pm$0.02}} &
  6.4\footnotesize{$\pm$1.1} &
  5.8\footnotesize{$\pm$0.9} &
  0.47\footnotesize{$\pm$0.08} &
  0.25\footnotesize{$\pm$0.08} &
  47.5\footnotesize{$\pm$7.0} &
  149.8\footnotesize{$\pm$28.3} &
  0.42\footnotesize{$\pm$0.04} &
  \textbf{0.06\footnotesize{$\pm$0.03}} &
  128.6\footnotesize{$\pm$8.4} &
  1,232.4\footnotesize{$\pm$96.7} \\
\multicolumn{2}{c|}{DARING} &
  0.85\footnotesize{$\pm$0.07} &
  0.18\footnotesize{$\pm$0.09} &
  7.1\footnotesize{$\pm$1.6} &
  13.7\footnotesize{$\pm$5.9} &
  0.48\footnotesize{$\pm$0.07} &
  0.29\footnotesize{$\pm$0.10} &
  57.2\footnotesize{$\pm$4.6} &
  180.0\footnotesize{$\pm$43.5} &
  0.30\footnotesize{$\pm$0.05} &
  0.16\footnotesize{$\pm$0.05} &
  136.9\footnotesize{$\pm$12.5} &
  1,653.0\footnotesize{$\pm$78.4} \\
\multicolumn{2}{c|}{\textbf{CASPER(Ours)}} &
  \textbf{0.92\footnotesize{$\pm$0.06}} &
  0.15\footnotesize{$\pm$0.04} &
  \textbf{4.3\footnotesize{$\pm$2.1}} &
  \textbf{4.1\footnotesize{$\pm$1.1}} &
  \textbf{0.56\footnotesize{$\pm$0.04}} &
  \textbf{0.17\footnotesize{$\pm$0.09}} &
  \textbf{42.3\footnotesize{$\pm$5.6}} &
  \textbf{123.2\footnotesize{$\pm$24.5}} &
  \textbf{0.51\footnotesize{$\pm$0.03}} &
  0.08\footnotesize{$\pm$0.04} &
  \textbf{118.5\footnotesize{$\pm$8.0}} &
  \textbf{1,150.3\footnotesize{$\pm$70.2}} \\ \bottomrule
\end{tabular}%
}

\label{tab:nonlinear-ER}
\end{table*}
\subsection{Overall Performance Comparison (RQ1)} \label{sec: RQ1}
\textbf{Simulations.} Following the convention of causal discovery, the generating data differs along three dimensions: the number of nodes, the degree of edge sparsity, and the graph type. We consider two well-known graph sampling models, namely Erdos-Renyi (ER) and scale-free (SF)~\cite{ER-SF} %can add a related supp
with $kd$ expected edges (denoted as ER$k$ or $SFk$) and $d = \{10, 20, 50 \}$ nodes. Specifically, in linear settings, similar to Zheng et al.~\cite{notears2018} and Gao et al.~\cite{DAG-GAN}, the coefficients are assigned following \textit{Uniform} distribution $U(-2,-0.5) \, \cup \, U(0.5,2)$ with additive standard Gaussian noise. In nonlinear settings, same as Zheng et al.~\cite{notears-mlp}, we generate the ground truth structural equation model (SEM) in Equation~\eqref{eq: SEM} under the Gaussian process with radial basis function (RBF) kernel of bandwidth one, where $f_j(\cdot)$ is the additive noise model with $N_j$ as an i.i.d. random variable following the standard normal distribution. Notice that both of these settings are known to be fully identifiable~\cite{pena2018identifiability, peters2014causal-with-adm}. 
In this experiment, we explore the improvements when introducing both linear and nonlinear settings by comparing the DAG estimations against the ground truth structure. We simulate $\{ \text{ER2}, \text{ER4}, \text{SF2}, \text{SF4} \}$ graphs following ER or SF scheme with $d= \{ 10, 20 ,50\}$ nodes. For each graph, 10 datasets of 2,000 samples are generated and the mean and standard deviations (std) of the above metrics are reported for a fair comparison.
\newline 

\noindent \textbf{Results.} Table~\ref{tab:linear-ER}, Table~\ref{tab:nonlinear-ER}, and Tables in the Appendix demonstrate the comparison of overall performance on both linear and nonlinear synthetic data. Notice that the best-performing methods are bold and the error bars report the standard deviation across datasets over ten trials. We observe that: 
\begin{itemize}[leftmargin=*]
    \item \textbf{Our method CASPER significantly outperforms the state-of-the-art baselines across all datasets.} Specifically, our proposed model, \ie dynamic causal space can achieve consistent improvements in terms of SHD and SID, revealing a lower number of missing, falsely detected, reversed edges and a better estimation of the ground truth graph. We attribute the improvements to the dynamic and DAG-ness-aware causal space, which enhances the score function with adaptive attention to the causal graph and boosts the quality of score-based DAG structure learning. With a closer look at the TPR and FDR, CASPER typically lowers FDR by eliminating spurious edges and increases TPR by actively identifying more correct edges. This clearly demonstrates that CASPER effectively helps reach a more accurate gradient optimization through the structure distance in causal space, thus extracting better causal relationships. 

% Additionally, as Figure~\ref{fig:time-cost} indicates, CASPER only adds a negligible amount of computational time cost as compared to NOTEARS-based methods.
    \item \textbf{As the performance comparison among different graphs shows, the score-based methods suffer from a severe performance drop under high-dimensional graph data.} Despite the previous methods working well in linear and low-dimensional data, they fail to scale to more than 50 nodes in ER4 and SF4 graphs. Taking NOTEARS-MLP as an example, although it can achieve 83\% TPR in 10 nodes (ER4) of nonlinear settings, it suffers dramatic degradation, \ie only 28\% TPR in the 50 nodes (ER4) graph, which is mainly due to difficulties in enforcing acyclicity in high-dimensional dense graph data~\cite{wo-acyclicity, lippe2021efficient-cd}. However, our CASPER optimization model still performs well with TPR higher than 50\%, which shows the great potential of learning high dimensional and dense DAG structures under a DAG-ness aware optimization framework.
\end{itemize}

% \begin{figure}
%     \centering
%     \includegraphics[width =1\linewidth]{}
%     \vspace{-15pt}
%     \caption{Empirical results for running time (sec) comparison on Sachs~\cite{sachs2005causal} dataset.}
%     % \vspace{-0.4cm}
%     \label{fig:time-cost}
% \end{figure}

\subsection{Study of Various Factors (RQ2)} 
\noindent \textbf{Motivations.} In real-world applications, it is common to encounter graphs with various noise scales or different densities, where the underlying causal structure is invariant. We conjecture that a robust DAG structure learning framework is able to successfully estimate the graphs under various factors (\ie, noise scales and graph density). In this section, we discuss various factors that may affect the performance of CASPER and other methods.
\newline 

\noindent \textbf{Simulations.} 
We choose SF graphs with $d=20$ for the two case studies. Specifically, for different noise scales in both linear and nonlinear settings, we set the distribution of the noises as $N(\mu, 1), \mu \in \{0.2, 0.4, 0.6, 0.8, 1\}$ in SEMs of Equation~\eqref{eq: SEM} and choose SF2 graph to generate data. Following the settings in Section~\ref{sec: RQ1}, we set more graphs with various densities (\ie degree of nodes) from $\{2, 4, 6, 8, 10, 12\}$. For instance, the node degree $=10$ means there are $200$ edges in total when generating the SF graph.
\newline

\noindent \textbf{Results.} Figure~\ref{fig:effect_of_the_noisescale_of_the_graph} shows the evaluations with various noise scales and Figure~\ref{fig:effect_of_the_density_of_the_graph} reports the performance comparison with different density. Both empirical results of them are conducted on linear nonlinear synthetic SF datasets. Different colors separately refer to the state-of-the-art methods and our method in SHD performance. We find that:
\begin{figure}[!h]
    \centering
    \subfigure[Linear Setting]{\label{fig: linear-density1}\includegraphics[width=0.490\linewidth, height=3.1cm]{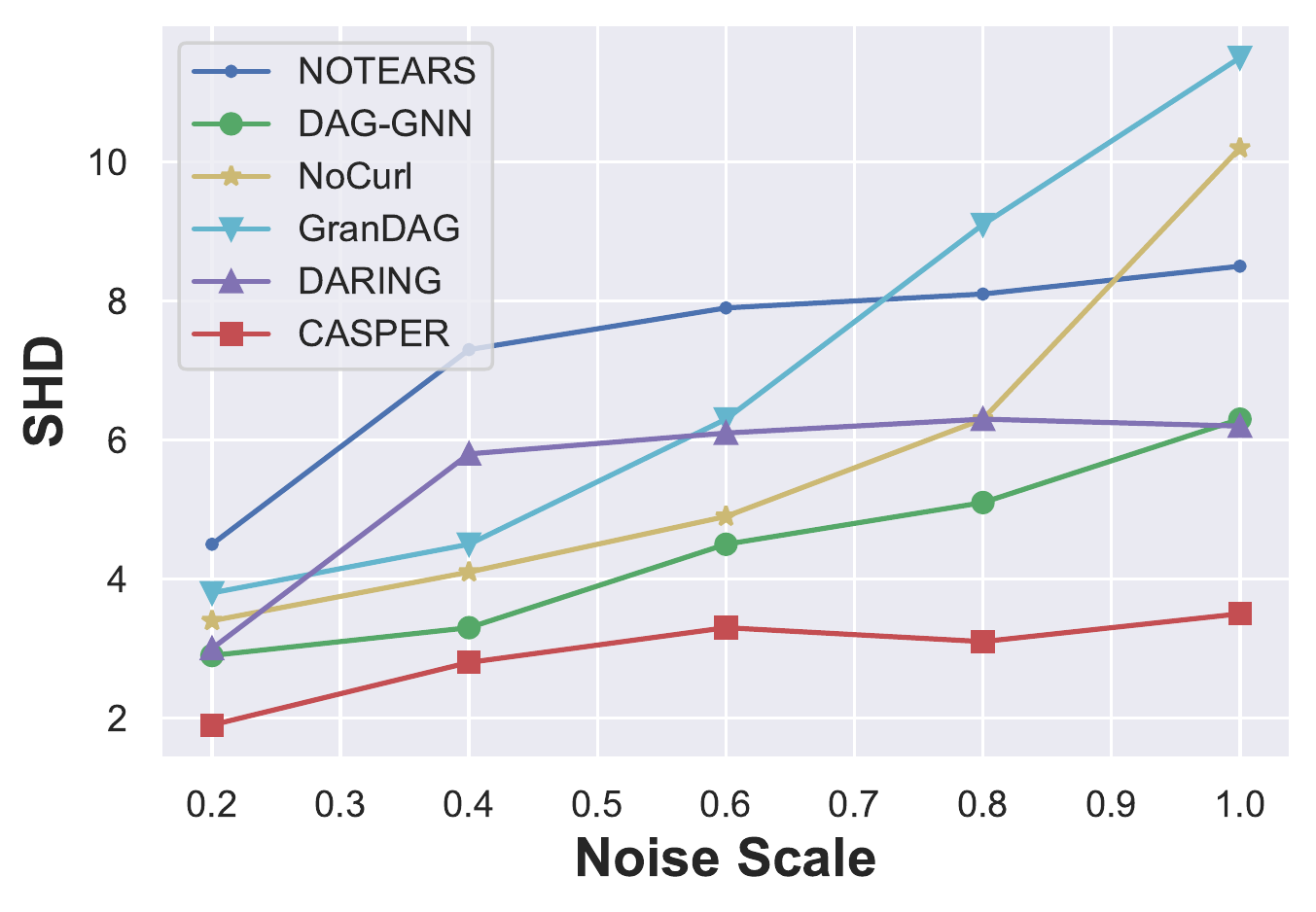}}
    \subfigure[Nonlinear Setting]{\label{fig:nonlinear-density1}\includegraphics[width=0.490\linewidth, height=3.1cm]{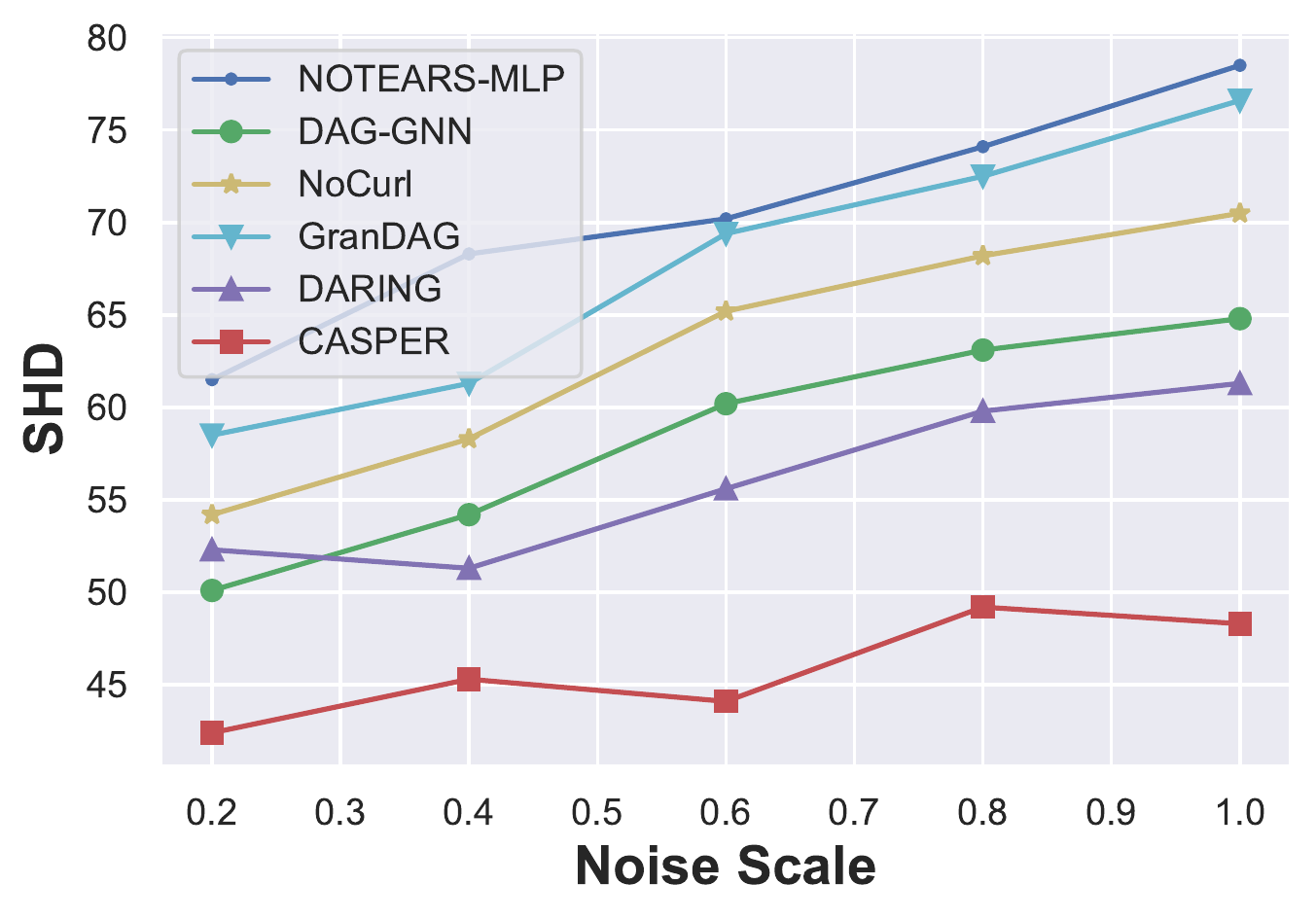}}
    \vspace{-10pt}
    \caption{SHD comparisons for various noise scales in SF2 graph with $20$ nodes.}
    \label{fig:effect_of_the_noisescale_of_the_graph}
\end{figure}

\begin{itemize}[leftmargin=*]
    \item \textbf{Compared with baselines, CASPER is noise-robust enough to observational data under various additive noise conditions.} Specifically, CASPER outperforms other methods consistently across all noise scale settings of SF graphs. We notice that other baselines are struggling from performance degradation when noise increases. We ascribe this hurdle to the static metric of score functions which is DAG-ness independent. In contrast, benefiting from DAG-ness aware score functions, our CASPER not only effectively captures the information from noise environments but also improves the DAG structure learning ability under perturbations.
    \item \textbf{As the performance comparison among density factors shows, our CASPER can better adapt to graphs with different degrees.} Although the baselines have the sparsity penalty to control the importance of graph density in regularization form, they do not perform well as our CASPER due to unawareness of causal structure in the score function.
    With a closer look at the evaluation curve of different densities, as the node degree increases, the improvements of CASPER over baselines get larger, which means CASPER could better adapt to the denser settings with adaptive structure attention.
\end{itemize}

\subsection{Evaluation on Real Data (RQ3)}
\noindent \textbf{Motivations.}
Heterogeneous data is a challenging yet frequently occurring issue in real-world observational data. Despite the variety of noise distribution, the underlying causal generating process always keeps stable in heterogeneous data. Specific DAG structure learners designed for heterogeneous data are prone to require prior knowledge of group annotations of each sample under strict conditions. However, group annotations are extremely costly and hard to collect and label.
\newline
\begin{figure}[!t]
    \centering
    \subfigure[Linear Setting]{\label{fig: linear-density}\includegraphics[width=0.490\linewidth, height=3.1cm]{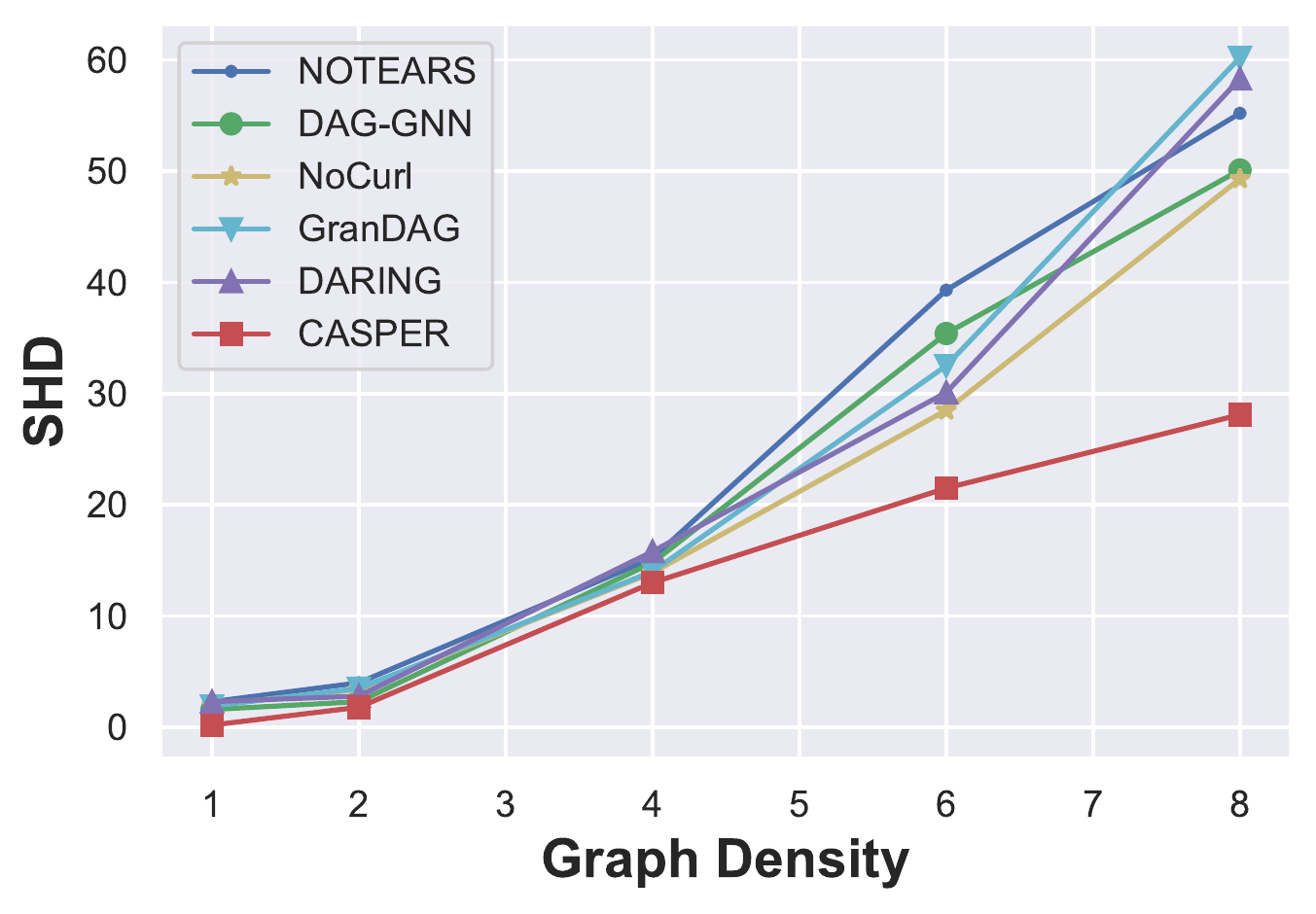}}
    \subfigure[Nonlinear Setting]{\label{fig:nonlinear-density}\includegraphics[width=0.490\linewidth, height=3.1cm]{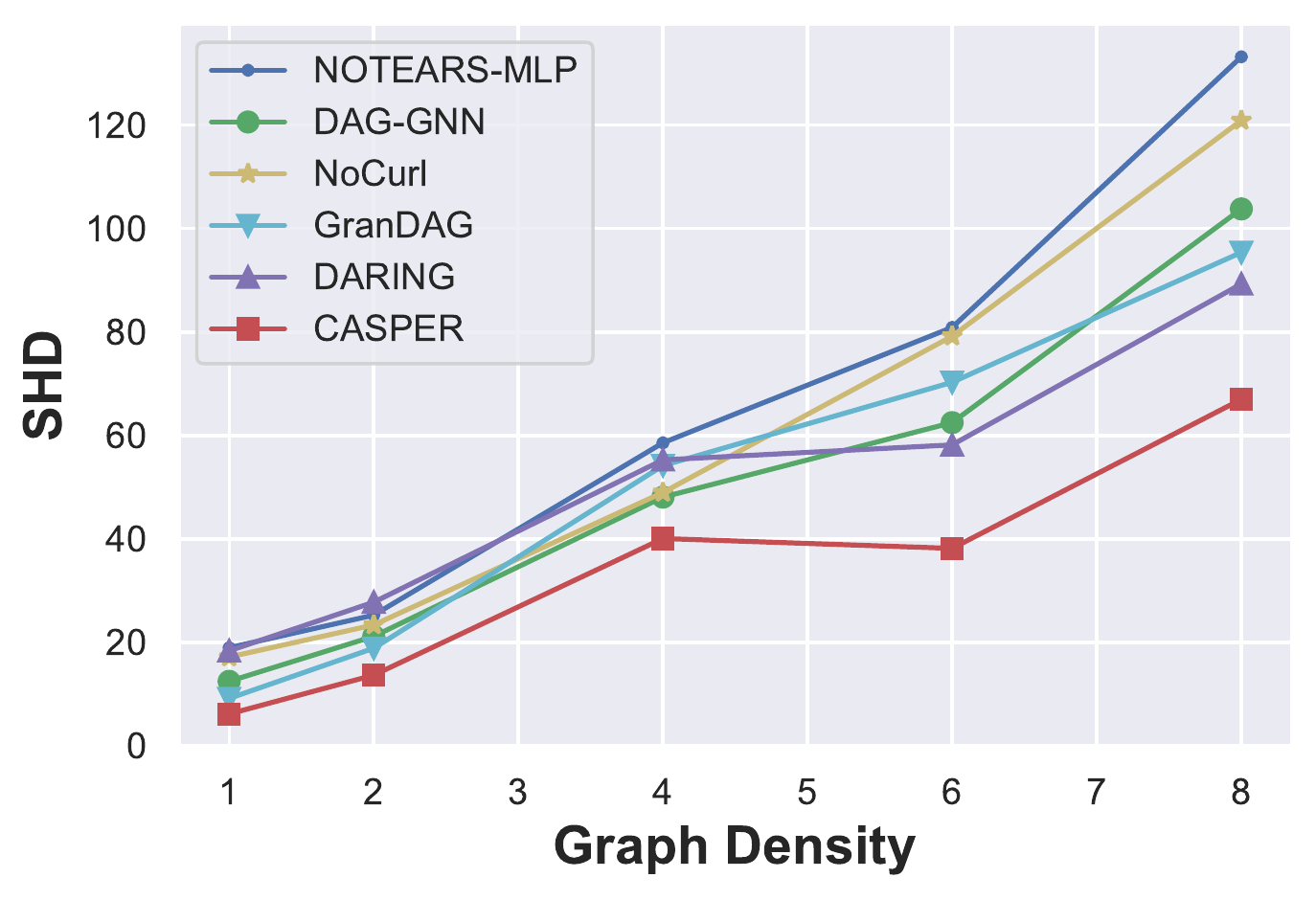}}
    \vspace{-10pt}
    \caption{SHD comparisons for different graph density conditions in SF graph with $20$ nodes.}
    \label{fig:effect_of_the_density_of_the_graph}
\end{figure}

% Please add the following required packages to your document preamble:
% \usepackage{graphicx}
\begin{table}[ht]
\centering
\caption{Empiricle results on Sachs~\cite{sachs2005causal} dataset.}
\vspace{-10pt}
\resizebox{\columnwidth}{!}{
\begin{tabular}{ccccc}
\toprule
Method                & \#Predicted Edges & \#Correct Edges & SHD         & SID         \\ \midrule
Random                & 22               & 1              & 23          & 63          \\
GES~\cite{GES}                   & 34               & 7              & 31          & 54          \\
ICA-LiNGAM~\cite{ICA-LiNGAM}           & 8                & 4              & 14          & 55          \\
FGS~\cite{FGS-2017}                   & 17               & 5              & 22          & 51          \\
GOLEM~\cite{GOLEM}                 & 6                & 4              & 15          & 53          \\
CD-NOD~\cite{heterogeneous-2020}                & 18               & 7              & 15          & -           \\
NOTEARS~\cite{notears2018}               & 20               & 6              & 17          & 48          \\
NOTEARS-MLP~\cite{notears-mlp}           & 19               & 7              & 16          & 45          \\
DAG-GNN~\cite{DAG-GNN-2019}               & 18               & 6              & 19          & 49          \\
DARING~\cite{he2021daring}                & 19               & 7              & 16          & 46          \\
NoCurl~\cite{NoCurl}                & 18               & 5              & 16          & 50          \\
GraN-DAG~\cite{gran-dag}              & 14               & 4              & 16          & 60          \\ \midrule
\textbf{CASPER(Ours)} & \textbf{15}      & \textbf{8}     & \textbf{12} & \textbf{42} \\ \bottomrule
\end{tabular}%
}

\label{tab:sachs}
\end{table}

\noindent \textbf{Dataset.} Sachs~\cite{sachs2005causal}, a real bioinformatics dataset, is for the discovery of the protein signaling network on expression levels of different proteins and phospholipids in human cells and is a popular benchmark for DAG structure learning, containing both observational and interventional data. Specifically, in Sachs, nine different perturbation conditions are imposed on sets of individual cells, each of which administers certain reagents to the cells. With the annotations of perturbation conditions, Sachs~\cite{sachs2005causal} is considered as the real-world heterogeneous dataset~\cite{mooij2020joint}. The true graph from~\cite{sachs2005causal} containing 11 nodes and 17 edges on 7,466 samples is widely used for research on graphical models, with experimental annotations accepted by the biological research community.
\newline

\noindent \textbf{Results.} In this benchmark dataset, we compare with recent continuous score-based methods, including NOTEARS~\cite{notears2018}, NOTEARS-MLP~\cite{notears-mlp}, DAG-GNN~\cite{DAG-GNN-2019}, NoCurl~\cite{NoCurl}, GraN-DAG~\cite{gran-dag}, DARING~\cite{he2021daring} and GOLEM~\cite{GOLEM}, traditional structural causal models ICA-LiNGAM~\cite{ICA-LiNGAM}, and combinational methods GES~\cite{GES} and FGS~\cite{FGS-2017}. Because the true causal graph in Sachs is sparse that a purely empty graph can reach as low as 17 in SHD, we report the \#total predicted edges, \#correct edges, SHD and SID in Table~\ref{tab:sachs}.

As Table~\ref{tab:sachs} illustrates, CASPER drives great performance breakthroughs and outperforms all other methods in correct discovery of the ground truth on real heterogeneous data. Specifically, most previous methods (\eg NOTEARS-based, GOLEM) suffer from notorious performance drops when the homogeneity assumption is unsatisfied, and pose hurdles from being scaled up to real-world large-scale applications. In stark contrast, benefiting from DAG-ness aware attention to the causal graph, 
CASPER achieves lower SHD as well as SID and improves the predicted correct edges, which accomplishes more profound causation understanding, leading to higher DAG structure learning quality. This validates that the potential of CASPER as a promising research direction for enhancing robustness and generalization for DAG structure learning when encountering various real-world data.

\section{related work}
DAG structure learning has recently taken the field of machine learning by storm~\cite{Causality-for-machine-learning}. A DAG $\mathcal{G}$ and a joint distribution are faithful to each other if and only if the conditional independencies true in the joint distribution are entailed by $\mathcal{G}$~\cite{pearl1988probabilistic}. This \ff{principle of faithfulness} enables one to recover $\mathcal{G}$ from the joint distribution. Given i.i.d. samples $\mathbf{X}$ from an unknown distribution corresponding to a faithful but unknown causal graph, \textit{DAG structure learning} refers to recovering the causal graph from $\mathbf{X}$.
In this section, we review the works of some related fields with this work. 

Generally speaking, \ff{there are two primary classes of algorithms employed for DAG structure learning (\ie causal discovery): constraint-based methods and score-based methods}. Our CASPER falls into the second class.

Constraint-based causal discovery methods first \ff{apply} conditional independence tests to \ff{identify the} causal skeleton under \ff{a} faithfulness assumption. Then they \ff{establish} the orientations of edges up to the Markov equivalence class, which \ff{usually contains structurally diverse DAGs with potentially unoriented edges.}
% usually contains DAGs that can be structurally diverse and may still have many unoriented edges. 
Examples include~\cite{sun2007kernel, zhang2012kernel} that use kernel-based conditional independence criteria and the well-known PC algorithm~\cite{spirtes2000causation} which implements the independence tests when no unobserved confounder exists. \ff{In scenarios involving} unobserved confounders, \ff{the} fast causal inference algorithm (FCI)~\cite{FCI} also calls independence judgement like PC, but targets an extended causal graph with \ff{bi-directed} edges. However, these methods are not robust as small errors in building the graph skeleton or are limited by sample size, thus leading to notorious performance degradation in the inferred Markov equivalence class. To alleviate the drawbacks, some score-based methods~\cite{GES, FGS-2017} have been proposed as an alternative solution. 

Score-based methods~\cite{prob-graph-model-2009, element-of-causal-inference} cast the problem of structure learning as an optimization problem over the space of DAGs.
Many popular methods tackle the combinatorial nature of the problem by performing the form of greedy search.
The Greedy Equivalence Search (GES)~\cite{GES} and its extension FGS~\cite{FGS-2017} utilizes a score function called BDeu to measure the correctness of the conditional independence of the target graph. The discrete algorithm starts with an empty graph and adopts a greedy strategy to change edges until the convergence of the score. In contrast to the methods that only identify the Markov equivalence class, SEMs, a class of score-based methods, can determine the true causal graph from the same equivalence class under additional assumptions. For instance, PNL~\cite{PNL} demonstrates its definite identifiability in two-variable settings except 5 special cases by examing if the disturbance is independent. On the other hand, conventional approaches, such as LiNGAM~\cite{ICA-LiNGAM}, combinatorially search for the DAG structure for multiple variables by converting the topological ordering of the causality diagram into the lower triangular matrix.

However, learning the DAG structure from purely observational data \ff{remains a challenge} mainly due to the intractable combinatorial nature of acyclic graph space~\cite{NP-hard, chickering2004large-nphard, chickering1996learning-nphard}. Fortunately, a recent breakthrough, NOTEARS proposed in Zheng et al.~\cite{notears2018}, reformulates the discrete DAG constraint into a continuous equality constraint, resulting in a differentiable score-based optimization problem. Further, there are various subsequent works after NOTEARS. \ff{DAG-GNN~\cite{DAG-GNN-2019} proposes a variant of gradient-optimized formulation in autoencoder architecture; NOTEARS-MLP~\cite{notears-mlp} and GraN-DAG~\cite{gran-dag} extend the NOTEARS framework to deal with more nonlinear functions using neural networks; RL-BIC~\cite{RL-CD} introduces reinforcement learning (RL) to search for the DAG; GOLEM~\cite{GOLEM} utilizes a likelihood-based objective with soft sparsity and DAG constraints instead of constrained optimization. In addition to single domain exploration}, some researchers~\cite{huang2020causal-cdnod, heterogeneous-2020, rescore} study causal discovery on multi-domain (\ie heterogeneous data where the underlying causal generating process remains stable but the noise distributions may vary). In this paper, we mainly focus on differentiable score-based DAG structure learning.
\section{Conclusion}
Despite the great success of causal structure learning on synthetic data, today's differentiable causal discovery methods are still far from being able to recover the target causal structures in various real-world applications. 
In this paper, we proposed CASPER, an effective optimization framework that boosts the DAG structure learning in a dynamic causal space, which adaptively perceives the graph structure during the training process.
Grounded by empirical visualization studies, CASPER is noise-robust to observational data under perturbation. 
Extensive experiments demonstrate that the remarkable improvement of CASPER on a variety of synthetic and real heterogeneous datasets indeed comes from the DAG-ness aware score function. 

One limitation of CASPER is that our framework is built on differentiable score-based causal discovery. 
In the future study, we will explore similar DAG-ness-aware strategies in more general structure learning frameworks. 
We believe that our CASPER provides a promising research direction to diagnose the performance degradation for nonlinear and noise data in DAG structure learning, and will inspire more valuable works for learning accurate causal graphs from observational data.

\section*{Acknowledgments}
This research is supported by the National Natural Science Foundation of China (9227010114) and the University Synergy Innovation Program of Anhui Province (GXXT-2022-040).

\bibliographystyle{ACM-Reference-Format}
% \balance
\bibliography{kdd2023_conference}

\appendix
\section{appendix}
\subsection{Additional Experiments}
More experimental results on both the linear and nonlinear synthetic data are reported in Appendix as Table~\ref{tab:linear-SF} and ~\ref{tab:nonlinear-SF} shows.

In order to further show the efficiency of our algorithm CASPER, we conduct additional experiments as Table~\ref{tab:time1}, ~\ref{tab:time2}, and ~\ref{tab:time3} shows. As the both synthetic and real data show, our CASPER only adds a negligible amount of computational time cost but achieves significant performance improvement compared to NOTEARS-based methods. 

\subsection{More Discussion}
\noindent \textbf{Discussion of dynamic causal space:} The current score function, a measure to evaluate candidate-directed graphs in structure learning, solely takes into account the data fitness while neglecting the graph structure. However, there is an implicit assumption for the score function to appropriately evaluate, the estimated directed graph must be an acyclic graph throughout the training phase, which is not achievable. We, therefore, believe that the next-generation score function for structure learning should account for both data fitness and graph structure.

In mathematics, a "metric space" is a set with a notion of distance between its elements. The distance is measured by a function called the metric or distance function. In structure learning, the space is equipped with a set of directed graphs and the notion of distance between candidate graphs is the predefined score function. Since our metric (causal distance) is dynamically changed according to the directed graphs, we defined it as "dynamic causal space" (Definition~\ref{def: causal space} in the paper). We would like to highlight here that defining a dynamic measure by incorporating knowledge of time, geometry, and data-related information has been explored in many other fields~\cite{gu2017newtype, duchi2011adaptive, hazan2016convex}.

The "dynamic causal space" proposed in this paper is one of the potential solutions that fuse structural information (DAG-ness) into score function. Here "dynamic" signifies the incorporation of different Lipshitz constants in the score function, which causes the goodness-of-fit to vary as the DAGness changes. By doing this, we may dynamically adjust the metric (causal distance in our paper) in the graph spaces as opposed to using a "static" measure neglecting the DAG-ness of the graph. Let us consider a simple scenario: If the initial graph's DAG-ness is poor (\ie h(G) in our paper is high), we employ a more complex function (with a larger Lipshitz norm) to measure its distance to the true graph. As the optimization progresses and the graph's DAG-ness improves, we switch to simpler functions (with a smaller Lipshitz norm) for measuring the distance. This adaptive adjustment allows us to better optimize the graph and avoid local optima. Notably, if the initial graph is already the true graph, the Lipshitz constant would be zero.

\noindent \textbf{Discussion of motivation example:} Considering the experiments on linear models in Figure~\ref{fig:teaser}, we would like to emphasize that our intention was not to carefully design a linear case to create such a difficult situation for NOTEARS~\cite{notears2018}. In contrast, we aim to demonstrate that even in relatively simple cases, the static measure (least square in NOTEARS~\cite{notears2018}) might not perform well. To show our motivation thoroughly, we have also conducted a nonlinear case in Table~\ref{tab:nonlinear-teaser}. The true graph is generated from:
$A:=\epsilon_A(\sim U(-1,1)), B:=2 \sin (A)+\epsilon_B(\sim N(0,2)), C:=\cos (A)+0.5 \sin (B)+\epsilon C(\sim N(-1,1)), D:=0.5 C+\epsilon D(\sim U(0,1))$. We also capture the three phases in the optimization process.

% Please add the following required packages to your document preamble:
% \usepackage{graphicx}
\begin{table}[!h]
\vspace{-3mm}
\caption{Nonlinear model experiments for Figure~\ref{fig:teaser}.}
\centering
\resizebox{0.8\columnwidth}{!}{%
\begin{tabular}{cccc}
\hline
               & Phase-1 & Phase-2 & Phase-3 \\ \hline
h(W)           & 20.13   & 10.32   & 3.91    \\
Score(NOTEARS) & 10.20   & 10.43   & 9.87    \\
Score(CASPER)  & 10.53   & 6.04    & 1.55    \\ \hline
\end{tabular}%
}

\label{tab:nonlinear-teaser}
\end{table}

% Please add the following required packages to your document preamble:
% \usepackage{graphicx}
\begin{table*}[b]
\centering
\caption{Linear Setting, for SF graphs of 10, 20, 50 nodes.}
\resizebox{\textwidth}{!}{%
\begin{tabular}{cc|cccc|cccc|cccc}
\toprule
 &
   &
  \multicolumn{4}{c|}{10 nodes} &
  \multicolumn{4}{c|}{20 nodes} &
  \multicolumn{4}{c}{50 nodes} \\
\multicolumn{2}{c|}{SF2} & 
  TPR$\uparrow$ &
  FDR$\downarrow$ &
  SHD$\downarrow$ &
  SID$\downarrow$ &
  TPR$\uparrow$ &
  FDR$\downarrow$ &
  SHD$\downarrow$ &
  SID$\downarrow$ &
  TPR$\uparrow$ &
  FDR$\downarrow$ &
  SHD$\downarrow$ &
  SID$\downarrow$ \\ \midrule
\multicolumn{2}{c|}{Random} &
  0.15\footnotesize{$\pm$0.03} &
  0.91\footnotesize{$\pm$0.09} &
  32.2\footnotesize{$\pm$8.0} &
  35.1\footnotesize{$\pm$7.3} &
  0.11\footnotesize{$\pm$0.10} &
  0.89\footnotesize{$\pm$0.03} &
  43.2\footnotesize{$\pm$5.4} &
  96.8\footnotesize{$\pm$10.4} &
  0.10\footnotesize{$\pm$0.08} &
  0.89\footnotesize{$\pm$0.07} &
  334.2\footnotesize{$\pm$16.9} &
  993.3 \footnotesize{$\pm$145.4} \\
\multicolumn{2}{c|}{NOTEARS} &
  0.94\footnotesize{$\pm$0.10} &
  0.11\footnotesize{$\pm$0.03} &
  0.9\footnotesize{$\pm$0.4} &
  1.0\footnotesize{$\pm$0.5} &
  0.90\footnotesize{$\pm$0.06} &
  0.02\footnotesize{$\pm$0.01} &
  4.0\footnotesize{$\pm$1.9} &
  9.8\footnotesize{$\pm$4.8} &
  0.82\footnotesize{$\pm$0.03} &
  0.07\footnotesize{$\pm$0.05} &
  23.6\footnotesize{$\pm$6.2} &
  75.4\footnotesize{$\pm$17.5} \\
\multicolumn{2}{c|}{DAG-GNN} &
  1.00\footnotesize{$\pm$0.00} &
  0.00\footnotesize{$\pm$0.00} &
  0.0\footnotesize{$\pm$0.0} &
  0.0\footnotesize{$\pm$0.0} &
  0.93\footnotesize{$\pm$0.05} &
  0.01\footnotesize{$\pm$0.01} &
  2.3\footnotesize{$\pm$0.6} &
  9.1\footnotesize{$\pm$3.7} &
  0.90\footnotesize{$\pm$0.04} &
  \textbf{0.04\footnotesize{$\pm$0.03}} &
  18.5\footnotesize{$\pm$8.2} &
  54.2\footnotesize{$\pm$10.3} \\
\multicolumn{2}{c|}{NoCurl} &
  0.95\footnotesize{$\pm$0.01} &
  0.08\footnotesize{$\pm$0.02} &
  0.8\footnotesize{$\pm$0.1} &
  0.9\footnotesize{$\pm$0.1} &
  0.90\footnotesize{$\pm$0.03} &
  0.03\footnotesize{$\pm$0.02} &
  3.4\footnotesize{$\pm$0.5} &
  8.9\footnotesize{$\pm$5.0} &
  0.91\footnotesize{$\pm$0.01} &
  0.06\footnotesize{$\pm$0.01} &
  16.7\footnotesize{$\pm$6.5} &
  60.8\footnotesize{$\pm$8.9} \\
\multicolumn{2}{c|}{Gran-DAG} &
  0.91\footnotesize{$\pm$0.02} &
  0.12\footnotesize{$\pm$0.04} &
  0.9\footnotesize{$\pm$0.0} &
  0.4\footnotesize{$\pm$0.1} &
  0.92\footnotesize{$\pm$0.04} &
  0.01\footnotesize{$\pm$0.02} &
  3.5\footnotesize{$\pm$1.4} &
  9.3\footnotesize{$\pm$4.2} &
  0.75\footnotesize{$\pm$0.02} &
  0.12\footnotesize{$\pm$0.03} &
  19.1\footnotesize{$\pm$7.3} &
  53.1\footnotesize{$\pm$13.4} \\
\multicolumn{2}{c|}{DARING} &
  0.96\footnotesize{$\pm$0.01} &
  0.07\footnotesize{$\pm$0.01} &
  0.5\footnotesize{$\pm$0.1} &
  0.6\footnotesize{$\pm$0.2} &
  0.93\footnotesize{$\pm$0.01} &
  0.09\footnotesize{$\pm$0.01} &
  2.8\footnotesize{$\pm$0.9} &
  8.3\footnotesize{$\pm$4.0} &
  0.89\footnotesize{$\pm$0.03} &
  0.04\footnotesize{$\pm$0.01} &
  18.9\footnotesize{$\pm$5.4} &
  58.0\footnotesize{$\pm$12.7} \\
\multicolumn{2}{c|}{\textbf{CASPER(Ours)}} &
  \textbf{1.00\footnotesize{$\pm$ 0.00}} &
  \textbf{0.00\footnotesize{$\pm$0.00}} &
  \textbf{0.0\footnotesize{$\pm$0.0}} &
  \textbf{0.0\footnotesize{$\pm$0.0}} &
  \textbf{0.96\footnotesize{$\pm$0.03}} &
  \textbf{0.00\footnotesize{$\pm$0.03}} &
  \textbf{1.8\footnotesize{$\pm$0.7}} &
  \textbf{7.4\footnotesize{$\pm$2.6}} &
  \textbf{0.96\footnotesize{$\pm$0.02}} &
  0.05\footnotesize{$\pm$0.02} &
  \textbf{13.4\footnotesize{$\pm$9.3}} &
  \textbf{43.2\footnotesize{$\pm$14.9}} \\ \midrule \midrule
\multicolumn{2}{c|}{SF4} &
  TPR$\uparrow$ &
  FDR$\downarrow$ &
  SHD$\downarrow$ &
  SID$\downarrow$ &
  TPR$\uparrow$ &
  FDR$\downarrow$ &
  SHD$\downarrow$ &
  SID$\downarrow$ &
  TPR$\uparrow$ &
  FDR$\downarrow$ &
  SHD$\downarrow$ &
  SID$\downarrow$ \\ \midrule 
\multicolumn{2}{c|}{Random} &
  0.13\footnotesize{$\pm$0.01} &
  0.93\footnotesize{$\pm$0.15} &
  57.2\footnotesize{$\pm$10.3} &
  79.1\footnotesize{$\pm$8.7} &
  0.09\footnotesize{$\pm$0.05} &
  0.88\footnotesize{$\pm$0.05} &
  108.2\footnotesize{$\pm$12.9} &
  155.6\footnotesize{$\pm$37.2} &
  0.12\footnotesize{$\pm$0.11} &
  0.89\footnotesize{$\pm$0.04} &
  1,023.5\footnotesize{$\pm$49.5} &
  1,903.9\footnotesize{$\pm$194.3} \\
\multicolumn{2}{c|}{NOTEARS} &
  0.94\footnotesize{$\pm$0.03} &
  0.03\footnotesize{$\pm$0.02} &
  12.2\footnotesize{$\pm$1.2} &
  6.2\footnotesize{$\pm$3.3} &
  0.90\footnotesize{$\pm$0.05} &
  0.12\footnotesize{$\pm$0.06} &
  15.2\footnotesize{$\pm$7.0} &
  28.6\footnotesize{$\pm$10.3} &
  0.81\footnotesize{$\pm$0.18} &
  0.25\footnotesize{$\pm$0.07} &
  77.6\footnotesize{$\pm$36.9} &
  176.2\footnotesize{$\pm$49.0} \\
\multicolumn{2}{c|}{DAG-GNN} &
  0.92\footnotesize{$\pm$0.02} &
  0.02\footnotesize{$\pm$0.01} &
  7.6\footnotesize{$\pm$2.0} &
  5.4\footnotesize{$\pm$1.8} &
  0.91\footnotesize{$\pm$0.03} &
  \textbf{0.10\footnotesize{$\pm$0.05}} &
  14.8\footnotesize{$\pm$6.3} &
  21.3\footnotesize{$\pm$11.6} &
  0.83\footnotesize{$\pm$0.14} &
  0.21\footnotesize{$\pm$0.09} &
  70.1\footnotesize{$\pm$12.3} &
  146.7\footnotesize{$\pm$50.2} \\
\multicolumn{2}{c|}{NoCurl} &
  0.93\footnotesize{$\pm$0.01} &
  0.05\footnotesize{$\pm$0.03} &
  5.9\footnotesize{$\pm$1.5} &
  6.1\footnotesize{$\pm$2.4} &
  0.89\footnotesize{$\pm$0.05} &
  0.12\footnotesize{$\pm$0.05} &
  13.9\footnotesize{$\pm$6.0} &
  22.5\footnotesize{$\pm$6.7} &
  0.80\footnotesize{$\pm$0.08} &
  0.18\footnotesize{$\pm$0.05} &
  75.4\footnotesize{$\pm$17.0} &
  164.0\footnotesize{$\pm$43.1} \\
\multicolumn{2}{c|}{Gran-DAG} &
  0.94\footnotesize{$\pm$0.02} &
  \textbf{0.01\footnotesize{$\pm$0.02}} &
  5.3\footnotesize{$\pm$1.3} &
  5.8\footnotesize{$\pm$3.0} &
  0.92\footnotesize{$\pm$0.02} &
  0.13\footnotesize{$\pm$0.07} &
  14.0\footnotesize{$\pm$7.1} &
  21.0\footnotesize{$\pm$9.7} &
  0.83\footnotesize{$\pm$0.12} &
  \textbf{0.13\footnotesize{$\pm$0.03}} &
  67.5\footnotesize{$\pm$15.9} &
  159.7\footnotesize{$\pm$36.9} \\
\multicolumn{2}{c|}{DARING} &
  0.91\footnotesize{$\pm$0.01} &
  0.08\footnotesize{$\pm$0.04} &
  5.4\footnotesize{$\pm$1.8} &
  5.2\footnotesize{$\pm$2.9} &
  0.88\footnotesize{$\pm$0.03} &
  0.11\footnotesize{$\pm$0.04} &
  15.8\footnotesize{$\pm$5.8} &
  26.3\footnotesize{$\pm$8.2} &
  0.79\footnotesize{$\pm$0.15} &
  0.20\footnotesize{$\pm$0.05} &
  79.1\footnotesize{$\pm$18.8} &
  180.3\footnotesize{$\pm$54.0} \\
\multicolumn{2}{c|}{\textbf{CASPER(Ours)}} &
  \textbf{0.97\footnotesize{$\pm$0.03}} &
  0.03\footnotesize{$\pm$0.02} &
  \textbf{1.6\footnotesize{$\pm$1.1}} &
  \textbf{3.6\footnotesize{$\pm$2.5}} &
  \textbf{0.95\footnotesize{$\pm$0.05}} &
  0.13\footnotesize{$\pm$0.05} &
  \textbf{13.0\footnotesize{$\pm$5.4}} &
  \textbf{18.6\footnotesize{$\pm$10.2}} &
  \textbf{0.90\footnotesize{$\pm$0.05}} &
  0.15\footnotesize{$\pm$0.04} &
  \textbf{55.4\footnotesize{$\pm$15.4}} &
  \textbf{136.8\footnotesize{$\pm$48.4}} \\ \bottomrule
\end{tabular}%
}

\label{tab:linear-SF}
\end{table*}
% Please add the following required packages to your document preamble:
% \usepackage{graphicx}
\begin{table*}[b]
\centering
\caption{Nonlinear Setting, for SF graphs of 10, 20, 50 nodes.}
\resizebox{\textwidth}{!}{%
\begin{tabular}{cc|cccc|cccc|cccc}
\toprule
 &
   &
  \multicolumn{4}{c|}{10 nodes} &
  \multicolumn{4}{c|}{20 nodes} &
  \multicolumn{4}{c}{50 nodes} \\
\multicolumn{2}{c|}{SF2} & 
  TPR$\uparrow$ &
  FDR$\downarrow$ &
  SHD$\downarrow$ &
  SID$\downarrow$ &
  TPR$\uparrow$ &
  FDR$\downarrow$ &
  SHD$\downarrow$ &
  SID$\downarrow$ &
  TPR$\uparrow$ &
  FDR$\downarrow$ &
  SHD$\downarrow$ &
  SID$\downarrow$ \\ \midrule
\multicolumn{2}{c|}{Random} &
  0.11\footnotesize{$\pm$ 0.04} &
  0.89\footnotesize{$\pm$ 0.07} &
  32.2\footnotesize{$\pm$ 8.0} &
  32.1\footnotesize{$\pm$ 6.3} &
  0.11\footnotesize{$\pm$0.10} &
  0.89\footnotesize{$\pm$0.03} &
  43.2\footnotesize{$\pm$5.4} &
  96.8\footnotesize{$\pm$10.4} &
  0.06\footnotesize{$\pm$0.03} &
  0.89\footnotesize{$\pm$0.06} &
  354.2\footnotesize{$\pm$13.7} &
  1,297.5\footnotesize{$\pm$174.2} \\
\multicolumn{2}{c|}{NOTEARS-MLP} &
  0.84\footnotesize{$\pm$0.06} &
  0.25\footnotesize{$\pm$0.12} &
  6.7\footnotesize{$\pm$2.8} &
  9.1\footnotesize{$\pm$3.7} &
  0.42\footnotesize{$\pm$0.13} &
  0.23\footnotesize{$\pm$0.11} &
  25.3\footnotesize{$\pm$4.1} &
  47.9\footnotesize{$\pm$7.4} &
  0.23\footnotesize{$\pm$0.05} &
  \textbf{0.18\footnotesize{$\pm$0.02}} &
  74.8\footnotesize{$\pm$5.6} &
  266.5\footnotesize{$\pm$46.2} \\
\multicolumn{2}{c|}{DAG-GNN} &
  0.80\footnotesize{$\pm$0.12} &
  0.20\footnotesize{$\pm$0.03} &
  5.3\footnotesize{$\pm$2.1} &
  12.0\footnotesize{$\pm$5.6} &
  0.44\footnotesize{$\pm$0.12} &
  \textbf{0.15\footnotesize{$\pm$0.12}} &
  21.2\footnotesize{$\pm$4.5} &
  41.5\footnotesize{$\pm$8.1} &
  0.28\footnotesize{$\pm$0.02} &
  0.22\footnotesize{$\pm$0.03} &
  77.9\footnotesize{$\pm$4.8} &
  285.7\footnotesize{$\pm$36.6} \\
\multicolumn{2}{c|}{NoCurl} &
  0.78\footnotesize{$\pm$0.09} &
  0.15\footnotesize{$\pm$0.08} &
  6.0\footnotesize{$\pm$2.4} &
  11.5\footnotesize{$\pm$4.2} &
  0.32\footnotesize{$\pm$0.08} &
  0.17\footnotesize{$\pm$0.10} &
  23.4\footnotesize{$\pm$4.3} &
  45.4\footnotesize{$\pm$9.2} &
  0.20\footnotesize{$\pm$0.04} &
  0.19\footnotesize{$\pm$0.01} &
  80.5\footnotesize{$\pm$5.7} &
  293.6\footnotesize{$\pm$56.1} \\
\multicolumn{2}{c|}{Gran-DAG} &
  0.79\footnotesize{$\pm$0.05} &
  0.12\footnotesize{$\pm$0.04} &
  5.5\footnotesize{$\pm$3.3} &
  13.2\footnotesize{$\pm$3.8} &
  0.45\footnotesize{$\pm$0.11} &
  0.21\footnotesize{$\pm$0.09} &
  18.9\footnotesize{$\pm$6.2} &
  67.3\footnotesize{$\pm$6.5} &
  0.29\footnotesize{$\pm$0.07} &
  0.20\footnotesize{$\pm$0.06} &
  81.2\footnotesize{$\pm$4.3} &
  382.5\footnotesize{$\pm$49.6} \\
\multicolumn{2}{c|}{DARING} &
  0.82\footnotesize{$\pm$0.08} &
  0.08\footnotesize{$\pm$0.01} &
  5.4\footnotesize{$\pm$2.0} &
  8.1\footnotesize{$\pm$4.0} &
  0.37\footnotesize{$\pm$0.07} &
  0.19\footnotesize{$\pm$0.13} &
  27.8\footnotesize{$\pm$5.0} &
  42.1\footnotesize{$\pm$10.0} &
  0.33\footnotesize{$\pm$0.03} &
  0.21\footnotesize{$\pm$0.02} &
  70.8\footnotesize{$\pm$4.2} &
  302.0\footnotesize{$\pm$57.3} \\
\multicolumn{2}{c|}{\textbf{CASPER(Ours)}} &
  \textbf{0.89\footnotesize{$\pm$ 0.11}} &
  \textbf{0.06\footnotesize{$\pm$0.02}} &
  \textbf{4.8\footnotesize{$\pm$1.6}} &
  \textbf{6.5\footnotesize{$\pm$3.3}} &
  \textbf{0.51\footnotesize{$\pm$0.13}} &
  0.16\footnotesize{$\pm$0.13} &
  \textbf{13.7\footnotesize{$\pm$6.3}} &
  \textbf{30.2\footnotesize{$\pm$9.4}} &
  \textbf{0.41\footnotesize{$\pm$0.05}} &
  0.24\footnotesize{$\pm$0.06} &
  \textbf{64.1\footnotesize{$\pm$6.2}} &
  \textbf{201.3\footnotesize{$\pm$38.5}} \\ \midrule \midrule
\multicolumn{2}{c|}{SF4} &
  TPR$\uparrow$ &
  FDR$\downarrow$ &
  SHD$\downarrow$ &
  SID$\downarrow$ &
  TPR$\uparrow$ &
  FDR$\downarrow$ &
  SHD$\downarrow$ &
  SID$\downarrow$ &
  TPR$\uparrow$ &
  FDR$\downarrow$ &
  SHD$\downarrow$ &
  SID$\downarrow$ \\ \midrule 
\multicolumn{2}{c|}{Random} &
  0.10\footnotesize{$\pm$0.02} &
  0.93\footnotesize{$\pm$0.13} &
  68.2\footnotesize{$\pm$9.3} &
  76.1\footnotesize{$\pm$6.7} &
  0.09\footnotesize{$\pm$0.04} &
  0.87\footnotesize{$\pm$0.05} &
  159.2\footnotesize{$\pm$13.8} &
  185.6\footnotesize{$\pm$26.2} &
  0.03\footnotesize{$\pm$0.02} &
  0.91\footnotesize{$\pm$0.04} &
  1,098.5\footnotesize{$\pm$79.5} &
  1,989.9\footnotesize{$\pm$184.2} \\
\multicolumn{2}{c|}{NOTEARS-MLP} &
  0.74\footnotesize{$\pm$0.09} &
  0.24\footnotesize{$\pm$0.05} &
  13.2\footnotesize{$\pm$3.9} &
  19.6\footnotesize{$\pm$4.9} &
  0.25\footnotesize{$\pm$0.04} &
  0.24\footnotesize{$\pm$0.09} &
  58.6\footnotesize{$\pm$3.5} &
  115.4\footnotesize{$\pm$21.0} &
  0.29\footnotesize{$\pm$0.02} &
  0.13\footnotesize{$\pm$0.01} &
  163.8\footnotesize{$\pm$10.2} &
  453.8\footnotesize{$\pm$115.6} \\
\multicolumn{2}{c|}{DAG-GNN} &
  0.80\footnotesize{$\pm$0.14} &
  \textbf{0.12\footnotesize{$\pm$0.03}} &
  12.1\footnotesize{$\pm$3.4} &
  9.4\footnotesize{$\pm$7.4} &
  0.34\footnotesize{$\pm$0.05} &
  0.17\footnotesize{$\pm$0.05} &
  48.1\footnotesize{$\pm$4.1} &
  128.6\footnotesize{$\pm$21.8} &
  0.23\footnotesize{$\pm$0.03} &
  0.12\footnotesize{$\pm$0.02} &
  143.2\footnotesize{$\pm$8.4} &
  336.4\footnotesize{$\pm$91.5} \\
\multicolumn{2}{c|}{NoCurl} &
  0.79\footnotesize{$\pm$0.03} &
  0.24\footnotesize{$\pm$0.04} &
  13.1\footnotesize{$\pm$2.0} &
  13.5\footnotesize{$\pm$6.5} &
  0.36\footnotesize{$\pm$0.06} &
  0.12\footnotesize{$\pm$0.03} &
  49.0\footnotesize{$\pm$3.6} &
  135.0\footnotesize{$\pm$26.7} &
  0.26\footnotesize{$\pm$0.03} &
  \textbf{0.05\footnotesize{$\pm$0.04}} &
  180.5\footnotesize{$\pm$7.3} &
  549.6\footnotesize{$\pm$57.8} \\
\multicolumn{2}{c|}{Gran-DAG} &
  0.82\footnotesize{$\pm$0.15} &
  0.26\footnotesize{$\pm$0.05} &
  8.9\footnotesize{$\pm$1.7} &
  12.0\footnotesize{$\pm$6.2} &
  0.24\footnotesize{$\pm$0.09} &
  \textbf{0.09\footnotesize{$\pm$0.02}} &
  54.2\footnotesize{$\pm$8.2} &
  133.2\footnotesize{$\pm$25.2} &
  0.21\footnotesize{$\pm$0.08} &
  0.14\footnotesize{$\pm$0.06} &
  171.4\footnotesize{$\pm$6.5} &
  562.7\footnotesize{$\pm$55.0} \\
\multicolumn{2}{c|}{DARING} &
  0.80\footnotesize{$\pm$0.08} &
  0.22\footnotesize{$\pm$0.06} &
  10.5\footnotesize{$\pm$2.3} &
  12.8\footnotesize{$\pm$5.1} &
  0.33\footnotesize{$\pm$0.07} &
  0.15\footnotesize{$\pm$0.08} &
  55.3\footnotesize{$\pm$4.1} &
  115.1\footnotesize{$\pm$24.0} &
  0.30\footnotesize{$\pm$0.05} &
  0.18\footnotesize{$\pm$0.02} &
  153.6\footnotesize{$\pm$8.5} &
  384.6\footnotesize{$\pm$42.7} \\
\multicolumn{2}{c|}{\textbf{CASPER(Ours)}} &
  \textbf{0.88\footnotesize{$\pm$0.06}} &
  0.15\footnotesize{$\pm$0.02} &
  \textbf{7.9\footnotesize{$\pm$2.8}} &
  \textbf{6.5\footnotesize{$\pm$3.7}} &
  \textbf{0.44\footnotesize{$\pm$0.05}} &
  0.12\footnotesize{$\pm$0.03} &
  \textbf{40.1\footnotesize{$\pm$5.3}} &
  \textbf{97.8\footnotesize{$\pm$23.3}} &
  \textbf{0.41\footnotesize{$\pm$0.02}} &
  0.21\footnotesize{$\pm$0.07} &
  \textbf{123.5\footnotesize{$\pm$7.7}} &
  \textbf{298.2\footnotesize{$\pm$84.5}} \\ \bottomrule
\end{tabular}%
}

\label{tab:nonlinear-SF}
\end{table*}
% Please add the following required packages to your 
\begin{table*}
\caption{Empirical results for running time (sec comparison) on ER2 graph of 10 nodes (Linear setting).}
\centering
\resizebox{0.75\textwidth}{!}{%
\begin{tabular}{cccccccc}
\hline
Method    & NOTEARS & NOTEARS-MLP & GraN-DAG & NoCurl & DAG-GNN & DARING & \textbf{CASPER} \\ \hline
Time(sec) & 1.32    & 2.12        & 23.30    & 1.01   & 10.30   & 3.21   & \textbf{2.32}   \\ \hline
\end{tabular}%
}

\label{tab:time1}
\end{table*}

% Please add the following required packages to your document preamble:
% \usepackage{graphicx}
\begin{table*}
\caption{Empirical results for running time (sec comparison) on ER2 graph of 10 nodes (Nonlinear setting).}
\centering
\resizebox{0.75\textwidth}{!}{%
\begin{tabular}{cccccccc}
\hline
Method    & NOTEARS & NOTEARS-MLP & GraN-DAG & NoCurl & DAG-GNN & DARING & \textbf{CASPER} \\ \hline
Time(sec) & 3.32    & 4.15        & 33.10    & 2.31   & 16.32   & 6.12   & \textbf{4.89}   \\ \hline
\end{tabular}%
}

\label{tab:time2}
\end{table*}

% Please add the following required packages to your document preamble:
% \usepackage{graphicx}
\begin{table*}
\caption{ Empirical results for running time (sec) comparison
on Sachs~\cite{sachs2005causal} dataset.}
\centering
\resizebox{0.75\textwidth}{!}{%
\begin{tabular}{cccccccc}
\hline
Method    & NOTEARS & NOTEARS-MLP & GraN-DAG & NoCurl & DAG-GNN & DARING & \textbf{CASPER} \\ \hline
Time(sec) & 3.97    & 4.02        & 35.30    & 3.25   & 15.30   & 6.58   & \textbf{5.32}   \\ \hline
\end{tabular}%
}

\label{tab:time3}
\end{table*}
\end{document}